%% file: main.tex
\documentclass[11pt]{article}

\usepackage[margin=1in]{geometry}
\usepackage{amsmath,amssymb,amsthm,mathtools}
\usepackage{booktabs}
\usepackage{graphicx}
\usepackage{subcaption}
\usepackage{microtype}
\usepackage{natbib}            
\usepackage{enumitem}
\setlist{topsep=2pt,itemsep=2pt,leftmargin=1.4em}
\usepackage{xcolor}
\usepackage{authblk}           
\usepackage[colorlinks=true,linkcolor=blue!55!black,citecolor=blue!55!black,urlcolor=blue!55!black]{hyperref}

\graphicspath{{figures/}{generated/}}

\input{macros}

\title{Saturation Makes Quantization Error Additive: A Coverage Model with a Certificate}

\author{Joshua Hill\thanks{Work done during an internship at Baseten Labs, Inc.}\\
Baseten Labs, Inc.\\  
David R.\ Cheriton School of Computer Science, University of Waterloo\\
}
\date{}

\begin{document}
\maketitle

\begin{abstract}
Mixed-precision quantization must decide which parts of a model to keep at higher precision. A common
premise, shared by sensitivity-based methods such as HAWQ and CoopQ, is that the loss from quantizing a set
of layers can be reconstructed from per-layer or pairwise sensitivities measured in isolation. We test this
premise at the 4-bit weight-and-activation precisions now being deployed, treating the change in loss
$f(S)$ from quantizing a layer set $S$ as a set function on the Boolean cube and
analyzing it through two classical changes of basis. This analysis yields two findings. First, across
configurations drawn from the deployment distribution, 85--93\% of the variance of $f$ is explained by
per-layer effects alone. Second, a monotone transform of a sum of per-layer terms reproduces $f$'s ranking of
configurations, misordering at most 2\% of pairs.

Both findings are consequences of the \emph{saturation} of the loss. Because the loss is bounded, each quantized
layer consumes a fraction $a_i$ (its break-rate) of the remaining headroom toward a ceiling $c$; the
resulting coverage model $f(S)=c\bigl(1-\prod_{i\in S}(1-a_i)\bigr)$ reproduces the measured variance
profile of $f$ to within a few percent from its $L$ fitted break-rates. This structure supports two
predictors of a configuration's loss, each with $L+1$ parameters. The additive model is the optimal
first-order predictor. By Parseval's identity its mean-squared error
equals the variance of $f$ left unexplained by per-layer effects, which we measure on full lattices,
estimate out of sample at full-network scale, and report with every result as
a certificate
of how well any additive model can do. The coverage model itself is the second predictor, and the two
predict comparably.

Saturation makes a layer's marginal damage depend on how many other layers are quantized, so sensitivities
measured in isolation, including a Hessian-trace HAWQ-v2, mis-price a configuration's loss by
71--376\%. An Aumann--Shapley path integral recovers the optimal additive coefficient up to a
layer-specific effective deployment density, preserving its ranking of layers exactly, a recovery we
prove under the coverage model. Both predictors price a configuration before deployment, match black-box
surrogates at small measurement budgets, degrade at most $2\times$ beyond the
configurations fit on, where the surrogates degrade $2$--$5\times$. As allocators at matched memory, they
attain the lowest KL divergence among the compared allocators at the contended budgets of
mixture-of-experts models from 30B to 355B parameters (17--21\% lower KL
divergence than the production ModelOpt baseline on Qwen3-30B).
Below four bits, the resulting allocations continue to solve code and reasoning tasks
at budgets where allocations from gradient sensitivities no longer produce terminating generations.
\end{abstract}

\input{sections/01_intro}
\input{sections/02_related}
\input{sections/04_walsh}
\input{sections/05_theory}
\input{sections/06_anchor}
\input{sections/07_predictor}

\input{sections/08_allocation}

\input{sections/09_scope}
\input{sections/11_future}
\input{sections/12_conclusion}

\bibliographystyle{plainnat}
\bibliography{references}

\clearpage
\appendix
\input{appendix/a_link}
\input{appendix/b_as}
\input{appendix/c_knapsack}
\input{appendix/d_setup}
\input{appendix/g_interp}
\input{appendix/e_proofs}
\input{appendix/f_evidence}
\input{appendix/f_related}

\end{document}

%% file: macros.tex
\theoremstyle{plain}
\newtheorem{innercustomthm}{Theorem}
\newenvironment{customthm}[1]
  {\renewcommand\theinnercustomthm{#1}\innercustomthm}
  {\endinnercustomthm}
\newtheorem{innercustomprop}{Proposition}
\newenvironment{customprop}[1]
  {\renewcommand\theinnercustomprop{#1}\innercustomprop}
  {\endinnercustomprop}

\newtheorem*{corollary*}{Corollary}
\newtheorem*{proposition*}{Proposition}

\newcommand{\thmone}{Theorem~1}
\newcommand{\thmtwo}{Theorem~2}    
\newcommand{\thmthree}{Theorem~3}  
\newcommand{\thmfour}{Theorem~4}   
\newcommand{\propfive}{Proposition~5}  

\newcommand{\E}{\mathbb{E}}
\DeclareMathOperator{\Var}{Var}
\newcommand{\lce}{L_{\mathrm{CE}}}
\newcommand{\fhat}{\hat f}
\newcommand{\mup}{\mu_p}
\newcommand{\sharegeq}{\mathrm{share}_{\ge 2}}
\newcommand{\Leff}{L_{\mathrm{eff}}}
\newcommand{\es}[1]{e_{#1}}            

\newcommand{\code}[1]{\texttt{#1}}

%% file: sections/01_intro.tex
\section{Introduction}\label{sec:intro}

Quantizing a transformer to low bit-widths is the dominant route to cheap inference, and the formats now
being deployed quantize both weights and activations to 4 bits. What remains a design choice is
\textit{mixed precision}: which layers (or tensors) to keep at higher precision under a memory budget. The
choice is consequential, since allocations at the same memory budget can differ in output fidelity by tens
of percent (Section~\ref{sec:allocation}), and it is currently made by per-layer sensitivity heuristics.

These heuristics share a structural premise. Each scores a layer by a sensitivity measured largely in
isolation (i.e. with other layers unquantized), such as a Hessian eigenvalue or trace \citep[HAWQ;][]{hawq2019,dong2020hawqv2}, a
squared-gradient Fisher score \citep[ModelOpt;][]{modelopt}, or at most a pairwise interaction term
\citep{clado2023}, and then aggregates these scores to rank layers or to estimate the loss of a joint
configuration. To our knowledge, to what extent the loss of quantizing a \emph{set} of layers is in fact
recoverable from such low-order measurements has never been tested directly. Throughout, the \emph{order}
of a term is the number of layers it involves jointly, so per-layer terms are first-order and pairwise
terms second-order. The loss from quantization is a \emph{set function} defined on all $2^L$ subsets of $L$
layers (equivalently, the boolean cube $\{0,1\}^L$), which cannot be enumerated for models of any realistic depth.

To measure the order of this function directly, we write
$f(S) = \lce(S\ \text{quantized}) - \lce(\text{full precision})$ for the damage of quantizing the layer set
$S$, obtained by evaluating the partially quantized model, and analyze the measured $f$ through two
classical changes of basis. The Walsh transform (the functional-ANOVA decomposition) splits the variance
of $f$ by order under the \emph{deployment distribution}, the distribution in which each layer is
quantized independently with probability $p$. The M\"obius expansion reconstructs $f$ from signed terms of
increasing order. This analysis yields two findings.

The first finding is that the variance of the damage is almost entirely first-order. Across configurations
drawn from the deployment distribution, 85--93\% of the variance of $f$ is explained by per-layer effects,
rising to 98--99\% on exact lattices (blocks of $L$ layers for which every one of the $2^L$ configurations
is measured) once measurement noise is corrected for (Fig.~\ref{fig:walsh}, Section~\ref{sec:walsh}). 
The unexplained remainder therefore serves as a \emph{certificate}: a single number, measured on full
lattices (an identity, by \thmone{}) or forecast in closed form from fitted break-rates at
full-network scale
(Section~\ref{sec:tau}), that states in advance
how well any additive model can do, and which we report alongside
every result. The additive model attaining this error floor also has a closed form
(Section~\ref{sec:anchor}).

The second finding is that the variance of $f$ explained by higher-order terms does not reflect a failure of per-layer structure. The
order-$\ge 2$ share, the fraction of the variance that per-layer effects leave unexplained, equals by
\thmone{} the mean-squared error of the best additive fit divided by the variance
(Section~\ref{sec:walsh}). This share can be nonzero
even when the damage is perfectly described by per-layer terms. If $f$ is additive after a strictly increasing
transformation, that is, it has the form $g\bigl(\sum_{i\in S} b_i\bigr)$ for per-layer contributions
$b_i$ and a strictly increasing $g$, then composing with $g^{-1}$ yields a function whose
order-$\ge 2$ energy is exactly zero. Whether $f$ belongs to this class is testable on rankings alone
(\thmtwo), and we measure the smallest fraction of pairwise configuration comparisons that any additive
index must order incorrectly, certified against measurement noise (Section~\ref{sec:ordinal}). This
fraction is at most 2\%, on some models indistinguishable from zero, and a direct search over
transformations agrees. The
measured damage is therefore of the form $f(S) \approx g\bigl(\sum_{i\in S} b_i\bigr)$.
This model class recurs throughout the paper. It justifies the form fitted in Section~\ref{sec:theory}, and it
caps how much any interaction-aware allocator can improve on additive configuration scoring
(Section~\ref{sec:allocation}).

We show that a single property, \emph{saturation}, explains both findings exactly. Because the loss is
bounded, one can define the \emph{headroom} $h(S) = c - f(S)$, the damage still available below a ceiling
$c$. The measurements are well described by a model in which each quantized layer consumes a fixed fraction
$a_i$ of the remaining headroom, $h(S\cup\{i\}) = (1-a_i)\,h(S)$, independently of which layers preceded
it. Equivalently, $f(S)=c\bigl(1-\prod_{i\in S}(1-a_i)\bigr)$, which we call the \emph{coverage model}.
The model is fitted to the measured losses of sampled
configurations by least squares. 

The model class of the second finding is not arbitrary. By a representation theorem of Acz\'el and Ling
\citep{aczel1966fe,ling1965assoc}, any rule that accumulates damage one layer at a time through a fixed
operation that is associative, continuous, and strictly increasing can be rewritten in the form
$g\bigl(\sum_i b_i\bigr)$ (App.~\ref{app:link}). 



Aggregating isolated sensitivities is equivalent to fitting an additive model constrained to predict
zero damage at the unquantized network, and this constraint is what fails in the incumbent
heuristics: freeing the single intercept parameter raises $R^2$ from $0.13$ to $0.94$ on the exact
lattice where FP4 damage is large (Section~\ref{sec:walsh}). Because saturation shrinks a layer's
marginal damage as more layers are quantized, a sensitivity measured with every other layer at full
precision is inflated by a closed-form factor, the quantitative form of a context dependence known
qualitatively
\citep{ancona2020shapley,mcgrath2023hydra,impq2025}, and summing $L$ inflated coefficients compounds
the error to 71--376\%, which we measure for isolate-and-sum (i.e. the quantization loss is the sum of the damages from quantizing each layer separately)
 and for a Hessian-trace HAWQ-v2
(Section~\ref{sec:anchor}). A single
\emph{Aumann--Shapley path integral} recovers the additive model's optimal coefficient evaluated at a
layer-specific effective deployment density, and with it the exact ranking of layers, with no sampled
configurations, a recovery we prove under the coverage model (Section~\ref{sec:predictor},
App.~\ref{app:as}).

Because both models predict rather than rank, they can \emph{price} a configuration, that is, quote its
loss before committing to it, which a ranking cannot do. On unseen configurations that protect the most
fragile layers and quantize the rest, the quoted loss is
accurate to within 4\% (App.~\ref{sec:interp}). Against the black-box surrogates used for the same task,
random forests \citep{mico2025}, Gaussian processes \citep{autoqra2026}, and RBF kernels \citep{amq2025},
the $(L{+}1)$-parameter models are best or tied at small measurement budgets, the regime that matters when
every measurement is a forward pass. The strongest of these, the Gaussian process, succeeds for a reason
our analysis identifies and fails where the theory predicts, degrading $2$--$5\times$ under
extrapolation to differently quantized configurations than those it was fit on, where the additive and coverage
models stay within $2\times$. That direction of failure matters because
an allocator must price configurations quantized at various levels
(Section~\ref{sec:allocation}).

Used as allocators at matched memory, the same fitted models attain the lowest KL divergence among the
compared allocators (sensitivity scores, structural rules, and the black-box surrogates converted into
allocators) on mixture-of-experts language models across models from 0.6B to 355B parameters
(Section~\ref{sec:allocation}). Below four bits configurations allocated from the fitted models
continue to solve code and reasoning tasks at effective bit-widths where configurations
from other methods of the same memory budget no longer produce terminating generations, at both 30B and 235B
(Section~\ref{sec:allocsub4}).

\paragraph{Contributions.}
\begin{enumerate}
\item A \textit{closed-form theory}: the coverage model's exact M\"obius and Walsh spectra
  (Theorems~3 and~4), corroborated by the confirmed divergence of the signed expansion
  (Fig.~\ref{fig:mobius}); the certificate identity (\thmone) with an exact $O(L)$ heterogeneous form
  (\propfive); and the ordinal characterization of additivity up to an increasing transformation
  (\thmtwo)
  (Sections~\ref{sec:walsh}--\ref{sec:theory}, App.~\ref{app:proofs}).
\item A \textit{negative result with a closed-form account}: sensitivities measured in isolation,
  including HAWQ-v2, mis-price a configuration's loss by 71--376\%, and the inflation factor
  responsible is derived in closed form (Section~\ref{sec:anchor}).
\item A \textit{fit-free estimator}: one Aumann--Shapley path integral recovers the additive model's
  optimal coefficient at a layer-specific effective density, and its layer ranking exactly, proven
  under the coverage model (Section~\ref{sec:predictor}, App.~\ref{app:as}).
\item An \textit{interpretable predictor} that is best or tied against RF/GP/RBF/pairwise baselines at
  small budgets, degrades at most $2\times$ under configuration extrapolation where the GP
  degrades
  $2$--$5\times$, and is validated on LLMs 0.6B--355B and a diffusion DiT
  (Section~\ref{sec:predictor}, Fig.~\ref{fig:ood}).
\item An \textit{allocator at matched memory} that attains the lowest KL among sensitivity-, rule-,
  and surrogate-based allocators (17--21\% below the production ModelOpt baseline on Qwen3-30B,
  13--27\% on Qwen3-235B, 20--37\% on GLM-4.6) and decides served task survival below four bits.
\end{enumerate}


%% file: sections/02_related.tex
\section{Related work}\label{sec:related}

\paragraph{Mixed-precision allocation via per-layer sensitivity.} AWQ \citep{lin2024awq} uses
activation-aware scaling, HAWQ and HAWQ-v2 \citep{hawq2019,dong2020hawqv2} a Hessian eigenvalue or trace,
BRECQ \citep{li2021brecq} and GPTQ \citep{frantar2022gptq} second-order weight reconstruction, and NVIDIA
ModelOpt's \code{auto\_quantize} \citep{modelopt}, the production method we benchmark, a Taylor/Fisher
score combined with a linear program over a sum of isolated per-layer scores. All of these aggregate
first- or second-order single-layer signals. In our deployment regime the closest neighbor is FGMP
\citep{fgmp2025}, which selects FP8-versus-NVFP4 blocks for weights and activations by a Fisher-weighted
perturbation score. Its score belongs to the same Taylor/Fisher family whose production embodiment,
ModelOpt, we benchmark throughout, while its intra-tensor block granularity requires co-designed kernels
that no serving stack we target provides, so it composes with layer-level allocation rather than
competing with it. ScaleBITS
\citep{scalebits2026} performs hardware-aligned bitwidth allocation for weights only.

\paragraph{Interaction-aware allocation.} CoopQ \citep{impq2025}, first circulated under the name IMPQ,
is the closest neighbor among interaction-aware methods. Its SPQE estimator samples layer permutations and progressively
quantizes along each, thereby measuring marginal contributions in context (permutation-sampled Shapley, at
a cost of roughly $100L$ quantized evaluations), then fits a linear-plus-pairwise surrogate
($a^\top q + q^\top K q$) and allocates by solving a binary quadratic program via MILP linearization.
CLADO \citep{clado2023} measures pairwise cross-layer effects by joint-quantization finite differences on
CNNs and ViTs and rejects the independence assumption.
Both fix the expansion order at two and do not test whether that order suffices. GuidedQuant
\citep{kim2025guidedquant} keeps within-output-channel dependencies but ignores cross-layer terms, and QEP
\citep{qep2025} models cross-layer accumulation as a compensation term. Sections
\ref{sec:walsh}--\ref{sec:theory} supply what none of these test, namely the order of the expansion, and
Section~\ref{sec:anchor} gives the closed-form account of the isolated-anchor bias they implicitly work around.

\paragraph{Configuration-quality surrogates.} For the prediction task itself, MiCo \citep{mico2025} fits
a random-forest accuracy predictor for edge-scale networks, AMQ \citep{amq2025} an RBF kernel model for
LLM weight-only quantization, and AutoQRA \citep{autoqra2026} a Gaussian-process surrogate inside a joint
search over bit-widths and adapter ranks. These predictor classes are accurate in-distribution but opaque,
and we show they degrade $2$--$5\times$ when extrapolating to more heavily quantized configurations
(Section~\ref{sec:predictor}).


\paragraph{}Additional related works are surveyed in App.~\ref{app:related}.



%% file: sections/04_walsh.tex
\section{The measured structure of quantization damage}\label{sec:walsh}

This section reports the two measurements on which the rest of the paper builds: the decomposition of the
damage variance by order, and the ordinal defect, which measures how much of that structure survives every
increasing transformation of the loss.

\paragraph{Set-up.} For a contiguous block of $L$ layers (finer units are quantized in
Section~\ref{sec:allocation}), centered at the network's mid-depth
(App.~\ref{app:setup}), we measure $f(S)=\Delta\lce$ for the full subset lattice of $2^L$ subsets
$S$, running the model with exactly the layers in $S$ quantized. The quantizer rounds stochastically, so
each $f(S)$ is the average of several independent rounding draws (16 for the lattices in this section;
App.~\ref{app:setup}). A measure over configurations must be fixed before variances or prediction
errors can be defined, and it should weigh the configurations that matter in practice, which are mostly
quantized. We therefore analyze $f$ under the \emph{deployment distribution} $\mup$, the product measure
in which each layer is quantized independently with probability $p$, and take $p=0.6$ throughout unless
stated otherwise. We expand the measured $f$ in the
\emph{Walsh basis} $\{\chi_T\}$, with coefficients $\hat f(T)$ \citep{odonnell2014boolean}. We denote the
\emph{order-$k$ energy} $W_k = \sum_{|T|=k} \fhat(T)^2$, and by Parseval's identity, $\Var(f)=\sum_{k\ge 1}W_k$, so the ratio $W_k/\Var(f)$ is the
fraction of the loss variance attributable to order-$k$ structure. The following identity gives these energies their
operational meaning.

\begin{customthm}{1}[additive error equals order-$\ge 2$ energy]\label{thm:one}
For any $f$ on $\{0,1\}^L$ and any $p\in(0,1)$, the best degree-$\le 1$ 
predictor $f_1$ under $\mup$ has squared error exactly equal to the high-order Walsh energy:
$\E_{\mup}\bigl[(f-f_1)^2\bigr] = \sum_{k\ge 2} W_k$.
\end{customthm}

\begin{proof}
The degree-$\le 1$ functions are exactly $\mathrm{span}\{\chi_\emptyset, \chi_{\{1\}}, \ldots,
\chi_{\{L\}}\}$, least squares in $L^2(\mup)$ is the orthogonal projection onto that span, and by
Parseval the squared residual is $\sum_{|T|\ge 2}\fhat(T)^2$.
\end{proof}

This is the $d=1$ case of low-degree learning \citep{odonnell2014boolean,lmn1993,sobol2001}.

\paragraph{Result (Fig.~\ref{fig:walsh-energy}).} Roughly 90\% of the variance is order-1, with
order-$\ge 2$ contributing 7--15\% across models and datasets. On a 10-layer Qwen3-8B block,
order-1 accounts for $87.3\%$ of the variance as measured and $98.1\%$ once the noise floor
contributed by the stochastic-rounding draws is subtracted (the split-half estimate of
App.~\ref{app:ordinal}), and on an 8-layer Qwen3-0.6B block for
$93.8\%$ and $99.2\%$. Across $\{$Qwen3-0.6B, Qwen2.5-3B, Qwen3-4B, Qwen3-8B, Llama-3.2-3B,
Qwen3-30B-A3B$\}\times\{$wiki, code$\}$ (WikiText-2 and a code corpus, App.~\ref{app:setup}) the
sampled order-1 share is 85--93\% when whole layers are quantized at FP4. The concentration is not an
artifact of whole-layer units or of small scale (Fig.~\ref{fig:walsh-energy}): on exact $2^8$ lattices
over eight consecutive linear units at W4A4, the order-1 share is 97--99\% on models from 0.6B to
Qwen2.5-72B, and 90.7\% on gpt-oss-120B.

\paragraph{The intercept diagnostic (Fig.~\ref{fig:walsh-intercept}).} An additive model with no
intercept, which is the constraint that aggregating isolated sensitivities imposes ($f(\emptyset)=0$),
reaches $R^2=0.13$ on the 0.6B block, where FP4 damage is large, against $0.78$ on the 8B block, where it
is small. Relaxing this constraint raises both to $0.94$ and $0.87$. Both fits are least squares under
$\mup$ on the full lattice.

The remaining questions are why the small order-$\ge 2$ content is structured the way it is and what
generates it. The answer we propose is saturation (Section~\ref{sec:theory}). Before turning to it, we ask how much of the
order-$\ge 2$ energy reflects structure among layers at all.

\subsection{The ordinal defect: additivity up to an increasing transformation}\label{sec:ordinal}

The order-$\ge 2$ energy has two caveats as an indicator of structure among layers. First, it is
measure-relative. 
Second, and more fundamentally, it is not
invariant under a change of coordinates on the loss values, since replacing $f$ by $\psi \circ f$ for a
strictly increasing $\psi:\mathbb{R}\to\mathbb{R}$ can change the energy if $\psi$ is curved, 
so an exactly additive $f$ could acquire nonzero higher-order energy. A principled
notion of irreducible structure should therefore quantify over all strictly increasing $\psi$. For
binary factors this is possible, because additivity up to an increasing transformation is a purely ordinal
property.

\begin{customthm}{2}[additivity up to an increasing transformation is ordinal]\label{thm:ordinal}
$f(x) = g(b\cdot x)$ for some $b\in\mathbb{R}^L$ and strictly increasing $g$ if and only if some additive
index reproduces $f$'s ranking of subsets, that is, $f(x) < f(y) \iff b\cdot x < b\cdot y$ for all
$x,y \in \{0,1\}^L$.
\end{customthm}

The theorem is the finite binary case of additive conjoint measurement
\citep{lucetukey1964conjoint,debreu1960topological}. App.~\ref{app:link} gives the
short proof. Throughout, an \emph{additive index} is a linear score $x \mapsto b\cdot x$ used only
through the ordering it induces on configurations. Whether such an index exists is a linear program, with one inequality
$b\cdot(x-y)>0$ per
ordered pair \citep{scott1964measurement}. The smallest witnesses of infeasibility are \emph{rank flips},
pairs of comparisons that order the same two layers oppositely in two contexts:
$f(S\cup i) > f(S\cup j)$ but $f(T\cup i) < f(T\cup j)$ forces both $b_i>b_j$ and $b_i<b_j$, so a single
true flip refutes every strictly increasing $\psi$. We call the minimal fraction of ranking
constraints that any additive index must violate the \emph{ordinal defect} of $f$.

\paragraph{Measurement.} On full $2^8$ configuration lattices, measured with 16 stochastic-rounding
draws per configuration (App.~\ref{app:setup}), we bound the defect from both sides, certifying each
statistic against measurement noise. For the lower bound we count rank flips. Splitting the 16 draws
into two halves gives two independent estimates of every configuration's loss, and a flip is counted
only if it is detected above the noise floor on the first half and recurs with the same sign on the second. The surviving
count is then read against the count that the identical pipeline produces on a matched-noise null, a
synthetic lattice constructed to contain no true flips (the coverage model of Section~\ref{sec:theory} fitted
to the measured losses, plus noise at each configuration's measured level). For the upper bound we
exhibit an index, since the fraction of ranking
constraints violated by any single index bounds the defect from above. The exhibited index minimizes the total
slack of the constraints of one half of the draws, again a linear program, and its violated fraction is
evaluated both on that half and on the held-out half's constraints, with the larger of the two quoted
(App.~\ref{app:ordinal}, per-model results in
Table~\ref{tab:ordinal}).
In the deployed per-block regime the certified ordinal defect is
at most $1.4\%$ (the largest cell, Llama-3.2-3B). The energy counterpart agrees: minimizing the order-$\ge 2$
share over all strictly increasing transformations of the loss (the estimator $E^*$ of
App.~\ref{app:ordinal}) yields an upper bound of $0$--$5\%$
across the deployed cells (last column of Table~\ref{tab:ordinal}). The zeros are not blindness of the procedure. Applied to the non-deployed
per-tensor regime of Section~\ref{sec:scope}, the same procedure finds flips well in excess of its
matched-noise null (74 against 18, Table~\ref{tab:ordinal}), so genuine reversals are detected where a
known mechanism produces them, and even there an additive
index orders at least $98\%$ of the comparisons above the noise floor correctly. 
The measured
$f$ is therefore, to that accuracy, of the form $g\bigl(\sum_{i\in S} b_i\bigr)$ for a strictly increasing $g$, and
the apparent high-order energy reflects the curvature of $g$ rather than structure among layers.

\begin{figure}[t]
  \centering
  \begin{subfigure}[t]{0.615\linewidth}
    \centering
    \includegraphics[width=\linewidth]{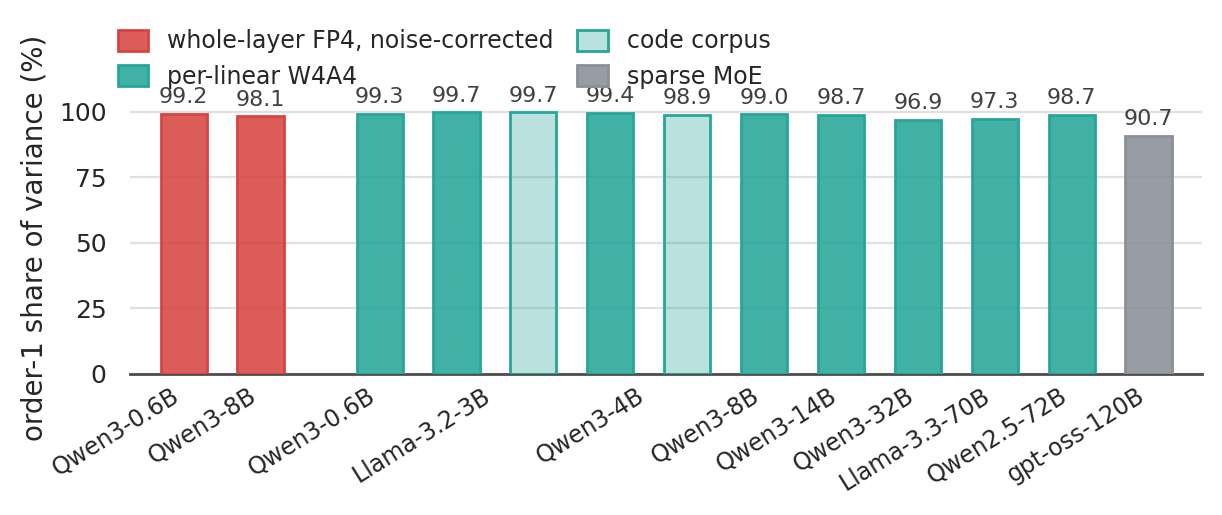}
    \caption{Order-1 share, every exact lattice.}
    \label{fig:walsh-energy}
  \end{subfigure}\hfill
  \begin{subfigure}[t]{0.365\linewidth}
    \centering
    \includegraphics[width=\linewidth]{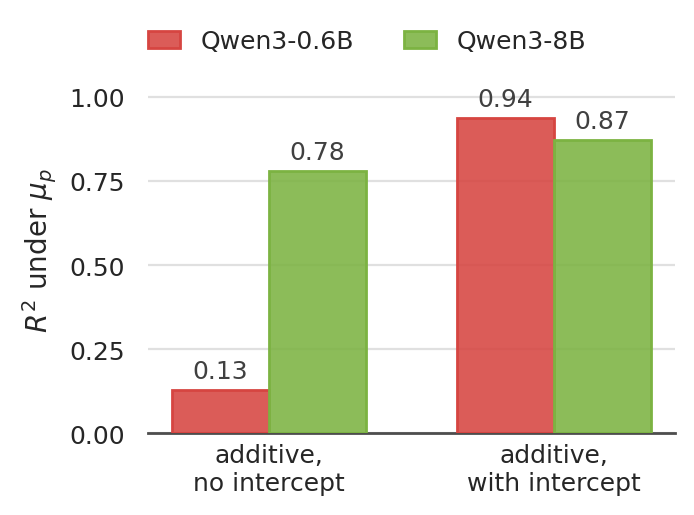}
    \caption{The intercept diagnostic (in sample).}
    \label{fig:walsh-intercept}
  \end{subfigure}
  \caption{(\subref{fig:walsh-energy}) Order-1 share of the damage variance on every exact lattice
  measured, under the deployment measure $\mup$, $p=0.6$. Red: the two whole-layer FP4 lattices of this
  section (Qwen3-0.6B, 8 layers, $2^8$ configurations; Qwen3-8B, 10 layers, $2^{10}$), noise-corrected
  as in App.~\ref{app:ordinal}. Teal: exact $2^8$ lattices at per-linear W4A4 granularity (NVFP4,
  block 16), each over eight consecutive mid-depth linear units, with damage measured as the KL
  divergence to the full-precision model on a WikiText-2 calibration set (light bars: code corpus).
  The share is 96.9--99.7\% on dense models from 0.6B to 72B and 90.7\% on the sparse-MoE
  gpt-oss-120B, where fused expert blocks add coupling.
  (\subref{fig:walsh-intercept}) The additive fit without an intercept, the constraint that
  isolated-sensitivity aggregation imposes ($f(\emptyset)=0$), against the same model class with
  a free intercept, which by \thmone{} is the degree-$\le 1$ projection; least squares under $\mup$ on
  the two whole-layer lattices.}
  \label{fig:walsh}
\end{figure}

%% file: sections/05_theory.tex
\section{The reconciling theory: saturation generates both views}\label{sec:theory}

\subsection{The coverage model}\label{sec:coverage}

Section~\ref{sec:walsh} left two measurements in need of a common explanation: per-layer effects carry
most of the variance of the measured damage $f$, and some additive index reproduces at least $98\%$ of
its pairwise ranking of configurations. This section proposes saturation as the origin of both. We describe the damage with a saturating \emph{coverage model},
\begin{equation}
  f(S) = c\Bigl(1 - \prod_{i\in S}(1 - a_i)\Bigr),
  \label{eq:coverage}
\end{equation}
with a per-layer \emph{break-rate} $a_i\in[0,1]$ and a \emph{ceiling} $c>0$. Define the \emph{headroom}
$h(S) := c - f(S)$, the remaining distance to the ceiling. Imposing the coverage model is exactly the
statement that each quantized layer consumes a fixed fraction $a_i$ of whatever headroom remains,
$h(S\cup\{i\}) = (1-a_i)\,h(S)$ regardless of which layers came before.

This model is grounded in a measured property of the damage. Saturation, as we use the term, is the
conjunction of two facts visible on the configuration lattices of Section~\ref{sec:ordinal}: the loss
of a heavily quantized configuration is finite, and the marginal damage of one more quantized
layer shrinks as more layers are already quantized. Table~\ref{tab:marginals} reports the measurement
for each of the five certification cells, where a \emph{cell} is one model and quantizer setting
measured as a full $2^8$ lattice. Averaging $f(S\cup\{i\}) - f(S)$ over layers $i$
and contexts $S \not\ni i$ of a fixed size, we see that every cell's average marginal quantization
damage is largest when no other layer is quantized and falls once other layers are quantized
concurrently. The
evidence for coverage is assembled with each claim's standing in App.~\ref{app:evidence}.

Equivalently, the coverage model is the statement that $f$ is additive in log-headroom,
$-\log(1-f(S)/c) = \sum_{i\in S} b_i$ with $b_i:=-\log(1-a_i)$, hence coverage lies in the class
$g\bigl(\sum_{i\in S} b_i\bigr)$ of Section~\ref{sec:ordinal} with generator $g(u)=c(1-e^{-u})$. 
The measurements of Section~\ref{sec:ordinal} place $f$ within a ${\sim}2\%$
ordinal defect of this class across measured models. 


\subsection{The M\"obius and Walsh-Hadamard expansions}\label{sec:duality}

The coverage model was introduced to account for the variance and ranking measurements of
Section~\ref{sec:walsh}, and it is fitted to measured loss values (Section~\ref{sec:tauval}). It makes
a second prediction about a quantity that plays no role in that fit. The \emph{M\"obius/Harsanyi
transform} gives signed coordinates $\varphi(T)=\sum_{R\subseteq T}(-1)^{|T|-|R|}f(R)$ that
reconstruct the damage as $f(T) = \sum_{R \subseteq T} \varphi(R)$. 
Restricting the reconstruction sums to subsets of size $\le 1$
is precisely the isolate-and-sum aggregation of Section~\ref{sec:anchor}, since
$\varphi(\emptyset) = f(\emptyset) = 0$ and $\varphi(\{i\}) = f(\{i\})$. Restricting them to size
$\le 2$ adds exactly the pairwise finite differences
$\varphi(\{i,j\}) = f(\{i,j\}) - f(\{i\}) - f(\{j\})$ that CLADO \citep{clado2023} measures as its
cross-layer sensitivities.

\begin{customthm}{3}[M\"obius spectrum of coverage]\label{thm:mobius}
The M\"obius transform of the coverage model is, exactly,
$\varphi(T) = c\,(-1)^{|T|+1}\prod_{i\in T} a_i$ for $|T|\ge 1$ and $\varphi(\emptyset)=0$.
\end{customthm}

Coverage predicts a distinct form for this expansion. Its coefficients are nonzero at
every order and alternate in sign, so low-order truncations should overshoot and oscillate. 
The measured lattices show exactly this behavior
(Fig.~\ref{fig:mobius}). On Qwen3-0.6B at FP4 (8-layer exact lattice) the partial sums alternate, 
converging only at
$k=L$. On a 10-layer Qwen3-8B block with the full $2^{10}$-configuration lattice the same
divergence-then-cancellation appears at larger amplitude, swinging to $\pm 4.8$ against a true loss of
$0.148$, an overshoot of up to $32\times$. Because an order-$k$ M\"obius coefficient is an
alternating sum of up to $2^k$ measured values, the partial sums amplify measurement noise, so we
certify the swing by the split-half device of App.~\ref{app:ordinal}: partial sums computed from two
disjoint halves of the rounding draws independently reproduce the alternating overshoot (both swing
beyond $\pm 4$), while the same sums computed from the half-difference table, which contains noise
only, stay within $\pm 1.2$.

\begin{customthm}{4}[Walsh spectrum of coverage]\label{thm:walshspec}
Under the $p$-biased measure $\mup$, the Walsh coefficients are, exactly,
\[
  \fhat(T) = c\,(-1)^{|T|+1}\,(p(1-p))^{|T|/2}\Bigl(\prod_{i\in T} a_i\Bigr)\prod_{i\notin T}(1 - a_i p)
  \quad\text{for } T \ne \emptyset,
\]
and $\fhat(\emptyset) = \E_{\mup}[f] = c\bigl(1 - \prod_{i}(1 - a_i p)\bigr)$.
\end{customthm}

Proofs of both transforms are in App.~\ref{app:proofs}. 

\begin{figure}[t]
  \centering
  \includegraphics[width=0.9\linewidth]{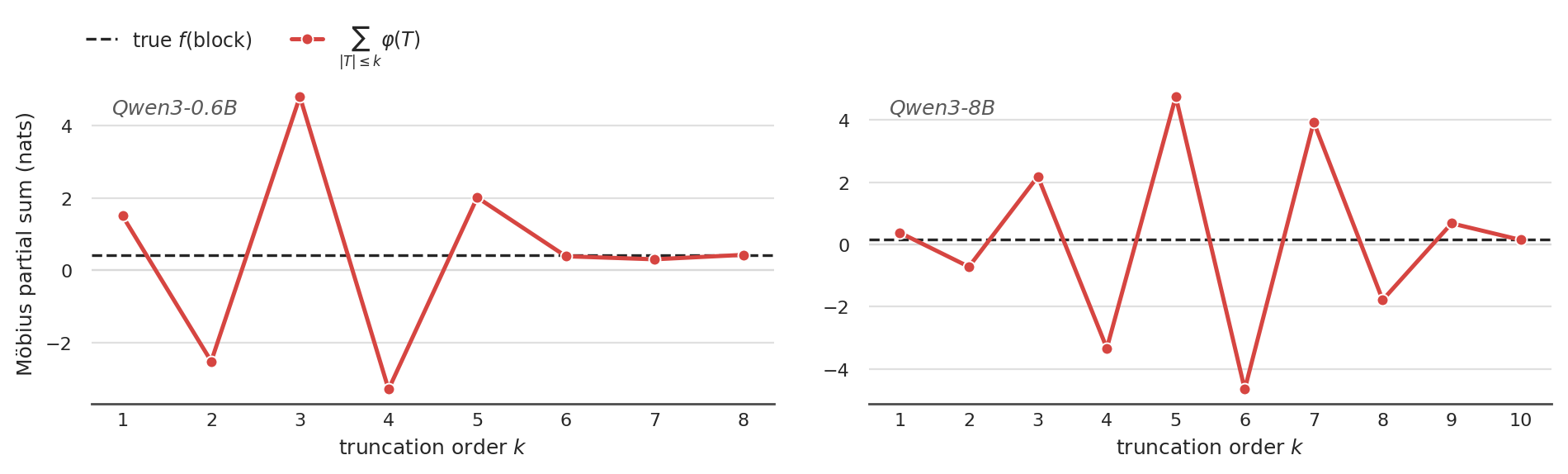}
  \caption{The signed M\"obius expansion behaves as coverage predicts: order-$k$ truncations alternate and
  overshoot before full-order cancellation. Qwen3-8B 10-layer exact
  1024-configuration lattice, swings to $\pm 4.8$ (up to $32\times$ the true $0.148$), converging only at
  $k=10$ (right). The oscillation is certified against measurement noise: noise-only partial sums from the
  split-half difference table stay within $\pm 1.2$ (Section~\ref{sec:duality}).}
  \label{fig:mobius}
\end{figure}

\subsection{The saturation-depth law}\label{sec:tau}
By \thmone{}, the mean-squared error of the best additive predictor equals the order-$\ge 2$ energy
$\sum_{k\ge 2} W_k$. Dividing by the variance turns it into the order-$\ge 2$ share
\[
  \sharegeq := \frac{\sum_{k\ge 2} W_k}{\Var(f)},
\]
the fraction of the damage variance that per-layer effects leave unexplained, which is the normalized
form in which we report the certificate. 
The certificate
columns of the benchmark and allocation tables quote the share,
which is comparable across cells (Sections~\ref{sec:predictor} and \ref{sec:allocation}), while
statements in loss units use the absolute energies, namely the additive error floor of \thmone{} and
the sampling variance of the additive-coefficient estimator, which is the order-$\ge 2$ energy plus
noise (App.~\ref{app:setup}). 
On a
measured lattice the share is computed exactly, but at full-network scale, where no $2^L$ lattice can
be measured, it must be forecast from the $L+1$ fitted parameters of the coverage model. This
subsection derives that forecast, and that of the higher-order Walsh energies, from \propfive{} in
closed form. For a break-rate
profile $(a_1,\ldots,a_L)$ define the per-layer energy ratios and their aggregates
\[
  \beta_i := \frac{p(1-p)\,a_i^2}{(1-a_ip)^2},
  \qquad
  \tau^2 := \sum_i \beta_i,
  \qquad
  \Leff := \frac{\bigl(\sum_i\beta_i\bigr)^2}{\sum_i\beta_i^2} \in [1, L],
\]
with $\tau$ the \emph{saturation depth} and $\Leff$ the participation ratio of the profile, which
equals $L$ exactly when all $\beta_i$ are equal and approaches $1$ as one layer dominates. Let
$\es{k} := \sum_{|T|=k}\prod_{i\in T}\beta_i$ be the elementary symmetric polynomials in the
$\beta_i$.

\begin{customprop}{5}[heterogeneous certificate]\label{prop:hetcert}
Assume not all $a_i$ are zero. Under the coverage model,
\begin{enumerate}
\item[(i)] (product form) $W_k = c^2\bigl[\prod_i(1-a_ip)^2\bigr]\,\es{k}$, hence
  computable in closed form from the $L+1$ fitted parameters, which are themselves fit from
  $\mathcal{O}(L)$ measured configurations, and,
  \begin{equation}
    \sharegeq = 1 - \frac{\tau^2}{\prod_i(1+\beta_i) - 1};
    \label{eq:hetshare}
  \end{equation}
\item[(ii)] (leading order) $\sharegeq = \frac{\Leff-1}{2\Leff}\,\tau^2 + R$ with
  $|R| \le \tfrac14\tau^4 e^{\tau^2}$, so to second order in the per-layer amplitudes heterogeneity
  enters the certificate only through the effective depth $\Leff$;
\item[(iii)] at fixed $\tau^2$, $\sharegeq$ is Schur-concave in
  $(\beta_1,\ldots,\beta_L)$, hence
  \[
    \sharegeq \;\le\; 1 - \frac{\tau^2}{(1+\tau^2/L)^L - 1} \;\le\; 1 - \frac{\tau^2}{e^{\tau^2}-1},
  \]
  with equality on the left exactly when all $\beta_i$ are equal.
\end{enumerate}
\end{customprop}

\begin{proof}
For (i), \thmfour{} gives
$\fhat(T)^2 = c^2 (p(1-p))^{|T|} \prod_{i\in T} a_i^2 \prod_{i\notin T}(1-a_ip)^2$. Multiplying and
dividing by $\prod_{i\in T}(1-a_ip)^2$ factors every squared coefficient into the $T$-independent
constant $c^2\prod_{i=1}^{L}(1-a_ip)^2$ times $\prod_{i\in T}\beta_i$, so $W_k$ is that constant times
$\es{k}$, the variance $\sum_{k\ge 1} W_k$ is the constant times $\prod_i(1+\beta_i) - 1$ by the
generating identity $\prod_i(1+\beta_i) = \sum_{k\ge 0}\es{k}$. The claim follows.
Parts (ii) and (iii) are proven in App.~\ref{app:proofs}.
\end{proof}


Combining \thmone{} with part (ii), for small $\tau$ the
best additive predictor's mean-squared error is at most ${\approx}\tfrac12\tau^2\,\Var(f)$, so even
an exact coverage fit, whose error under the model is zero, can improve on additive prediction by no
more than that. 
The symmetric case is stated as a corollary, with its binomial order
spectrum, in App.~\ref{app:proofs}.

\subsection{Validation of the certificate}\label{sec:tauval}

\paragraph{The $\tau$-collapse (Fig.~\ref{fig:tau}).} We empirically test the saturation-depth law.
Each cell contributes one point, its measured order-$\ge 2$ share
against its fitted $\tau$, and \propfive(iii) requires any coverage-generated profile to fall on or
below the parameter-free
curve, with balanced break-rate profiles on it.
We measured 36 full $2^8$ lattices, three models
(Qwen3-0.6B, Qwen2.5-3B, Llama-3.2-3B) crossed with twelve quantizer settings, integer
round-to-nearest at 8/6/5/4/3 bits with per-channel or per-tensor scales and 4-bit floating point
(FP4) with per-block or per-tensor scales (App.~\ref{app:setup}). For each cell we noise-correct the
order energies (App.~\ref{app:ordinal}), fit the coverage model to the same lattice by nonlinear
least squares (App.~\ref{app:setup}), and compare the measured order-$\ge 2$ share with the fit. The
measured share tracks the fitted $\tau = (\sum_i\beta_i)^{1/2}$ at
Spearman $\rho = 0.958$, and the exact share of \propfive(i), computed from the fitted $(c,a_i)$,
predicts the measured share at Pearson $\rho = 0.984$. Ten of the 36 cells exceed the curve, by at
most 8 percentage points. The largest exceedances are 8-bit cells whose absolute damage sits at the
measurement noise floor, where the noise-corrected share is unstable, and Llama-3.2-3B cells, the one
model with certified rank reversals (Table~\ref{tab:ordinal}), so the exceedances are consistent with
share noise plus the residual structure that coverage does not capture.
Repeating the comparison on Qwen3-30B-A3B
blocks (32 further cells, 8 block positions by 4 quantizer formats) gives Pearson $r = 0.997$.

\paragraph{Out-of-sample validation.} A certificate measured on a full lattice uses only \thmone{}.
When enumeration is intractable, we instead assume the coverage model and forecast the certificate
from break-rates fitted on sampled configurations. We
test the forecasting pipeline directly against held-out lattices, with per-cell results in
Table~\ref{tab:certest} of App.~\ref{app:setup}. The forecast classifies the near-additive and
heavily coupled regimes correctly (App.~\ref{app:setup}).


\begin{figure}[t]
  \centering
  \includegraphics[width=\linewidth]{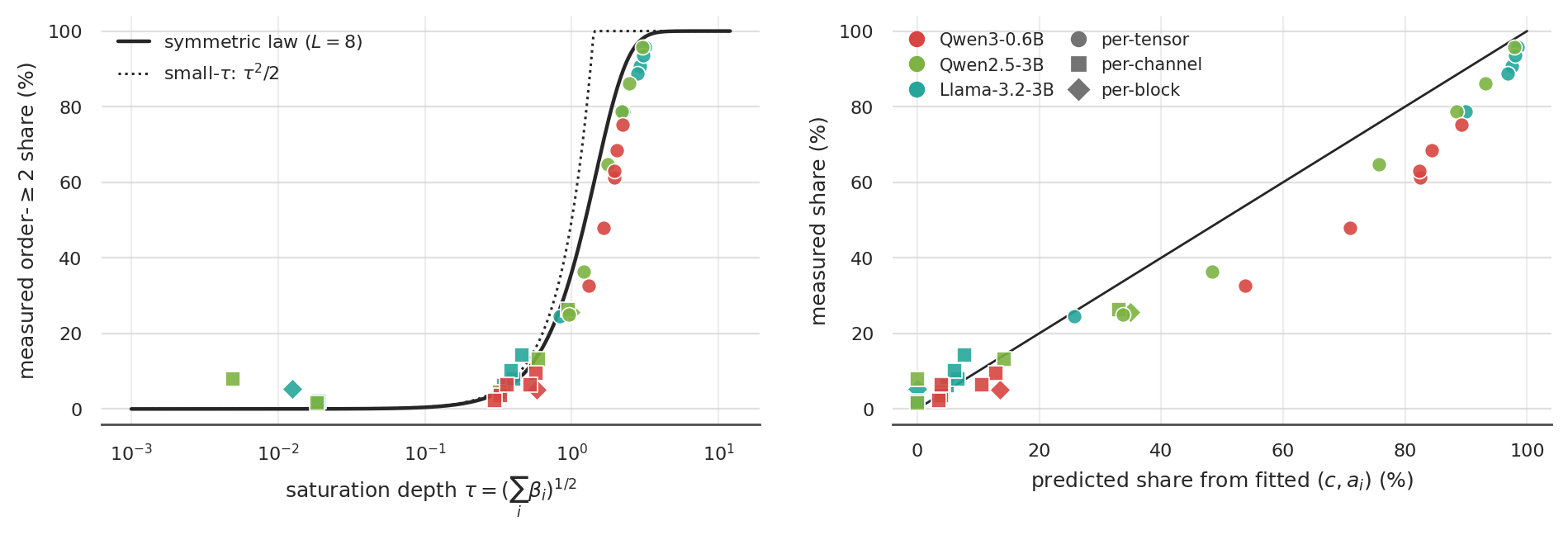}
  \caption{One parameter organizes the order-$\ge 2$ share. Left: measured (noise-corrected)
  order-$\ge 2$ energy share vs.\ fitted saturation depth $\tau = (\sum\beta_i)^{1/2}$ for 36 cells, 
  against the parameter-free symmetric law
  $\sharegeq(\tau)=1-\tau^2/[(1+\tau^2/L)^L-1]$. The curve is parameter-free: by
  \propfive(iii) it is the exact worst case at matched $\tau$ for a coverage-generated profile. Most
  per-channel and
  per-block cells sit on or below it (ten cells exceed it by at most 8 percentage points, noise-floor
  and Llama-3.2-3B cells, Section~\ref{sec:tauval}), while per-tensor 
  cells deviate below it, in the regime the theory flags as beyond coverage. Right: measured share vs.\ the exact
  heterogeneous \propfive{} prediction from the fitted $(c,a_i)$ (Pearson $r=0.984$; 30B-block extension
  $r=0.997$).}
  \label{fig:tau}
\end{figure}

%% file: sections/06_anchor.tex
\section{Why isolated sensitivity mis-prices a configuration: the wrong anchor}\label{sec:anchor}

Throughout, a predictor's \emph{anchor} is the
configuration, or distribution over configurations, at which its coefficients are measured: isolated sensitivities
are anchored at the unquantized network,
while the coefficients of a fitted additive model are anchored at the deployment distribution.

\paragraph{The measurement.} An estimate of a configuration's loss built from per-layer sensitivities
alone is the sum $\sum_{i\in S} f(\{i\})$, which we call \emph{isolate-and-sum} throughout. 
 On the fully quantized configuration of the
exact whole-layer FP4 lattices (the two lattices of Section~\ref{sec:walsh} and the four
certification cells of Table~\ref{tab:ordinal}) this aggregation overstates the measured loss by
71--346\%. The mis-pricing persists on full networks. Predicting $f$ on
configurations that protect a sampled subset of layers and quantize the rest (all 36 Qwen3-8B and 48
Qwen3-30B-A3B layers, protection sizes 1 to
$L/2$), isolate-and-sum's median relative error is
51\% on Qwen3-8B and 203\% on Qwen3-30B-A3B, while additive and coverage models fitted on 80
sampled configurations attain 2--3\% on the same held-out protection sets. HAWQ-v2, scoring each layer by its Hessian trace (Hutchinson estimator) times its mean squared
quantization perturbation \citep{dong2020hawqv2}, inherits the same anchor: summed over each of the
two Section~\ref{sec:walsh} lattices it overstates the fully quantized corner by 89\% (Qwen3-8B) and
376\% (Qwen3-0.6B), against isolate-and-sum's 155\% and 250\% on the same corners.

\paragraph{Marginal scale.} The failure is caused by two distinct order-1 coefficients.
The \emph{in-context} slope, the coefficient of the optimal additive fit, is
$w_i = \E_p[f\mid x_i=1] - \E_p[f\mid x_i=0]$, the marginal effect of layer $i$ averaged over deployment
configurations. The \emph{isolated} slope, the quantity that HAWQ-style scores approximate,
is $s_i = f(\{i\})$, the marginal at the
unquantized network. Under coverage, $s_i = c\,a_i$ while $w_i = c\,a_i\prod_{j\ne i}(1-p a_j)$, so
\[
  \frac{s_i}{w_i} = \frac{1}{\prod_{j\ne i}(1-p a_j)} \approx e^{\,p\sum_j a_j},
\]
a ratio that grows exponentially with $p\sum_j a_j$. We validate the formula directly on the
full $2^8$ lattices, comparing the measured ratio $s_i/w_i$ 
against the value the fitted coverage model predicts. The measured medians are 1.9 / 3.0 / 3.9 / 9.2
(Llama-3.2-3B / Qwen3-30B-A3B / Qwen3-8B / Qwen3-0.6B) against predicted
1.8 / 2.3 / 2.5 / 7.7, with more detail in
App.~\ref{app:knapsack}. 
The two-slope identity is a one-line corollary of Theorems~1 and~4 (App.~\ref{app:proofs}).


%% file: sections/07_predictor.tex
\section{An interpretable predictor, in and out of distribution}\label{sec:predictor}

Two predictors are fit throughout: the coverage model, and the additive model. Each has $L+1$ parameters, 
fit by least squares in
loss space on $N \approx L$ measured configurations.
The choice of estimator (joint regression rather than per-coefficient sampling), its consistency under
misspecification, and the cost comparison with the ${\sim}100L$ progressive-quantization
evaluations of CoopQ's SPQE estimator and with the fit-free Aumann--Shapley variant are discussed in
App.~\ref{app:setup}.

\subsection{Benchmark}\label{sec:benchmark}

The benchmark task is to predict $f(S)$ for held-out configurations from $N$ measured ones. Per
(model, corpus) cell, every method is fit on $N$ configurations sampled from $\mup$ at $p=0.6$ and
scored by median relative error on 120 held-out configurations, against the surrogate predictors of
Section~\ref{sec:related} (random forest, Gaussian process, RBF kernel model, and CoopQ's
linear-plus-pairwise model), at W4A4 on the six models and two corpora of Table~\ref{tab:master}
(App.~\ref{app:setup}, with the error-versus-$N$ sweep in Fig.~\ref{fig:predn}). At the $N=L/2$ budget,
coverage
or additive is best in 5 of 11 cells and within 0.03 of best in 8, with small margins separating the
leading methods, while isolate-and-sum, at the same measurement cost, fails at 0.38--0.63 error on the
FP4-damaged models. 


The deciding comparisons are extrapolation to differently quantized configurations,
where coverage degrades at most $2\times$ against the Gaussian process's $2$--$5\times$
(Section~\ref{sec:ood}), allocation, where the surrogates fail outright (Section~\ref{sec:allocmoe}),
and pricing, which only the parametric models provide (App.~\ref{sec:interp}).


\subsection{Out of distribution: stationary kernels do not encode saturation}\label{sec:ood}

On binary configuration vectors the RBF kernel \citep{williams2006gpml} reduces to
$K(S,S') = \exp(-\gamma\,|S\triangle S'|)$, a stationary kernel in the Hamming metric. Under the
uniform measure, kernels of this
form are diagonal in the Walsh basis with eigenvalues that depend only on the order $|T|$ and decay as
$|T|$ grows (\citealp{odonnell2014boolean}; derivation in App.~\ref{app:proofs}), and approximately so
at $p=0.6$, so Gaussian-process regression with this kernel is
shrinkage of high-order Walsh coefficients \citep[Section~2.6]{williams2006gpml}, the same low-order
preference our measurements justify.
This is why the Gaussian process matches our predictors at large $N$ in distribution, where order-1
terms carry most of the damage variance (Section~\ref{sec:walsh}).
The difference emerges out of distribution. 
Stationarity applies the same similarity rule everywhere on
the cube, while saturation does not: under coverage the marginal effect
$f(S\cup\{i\})-f(S)=c\,a_i\prod_{j\in S}(1-a_j)$ shrinks as the context $S$ grows. The stationary prior
does not encode this structure.
We test this on the two cells of Table~\ref{tab:seeds} (Qwen3-0.6B and
Qwen2.5-3B, wiki), fitting every method on configurations drawn at $p=0.6$ as before and scoring each
on 120 test configurations drawn at $p=0.85$ 
or at $p=0.35$. At the $N=L/2$ budget, coverage's error grows by
$1.3$--$2.0\times$ and the additive model's by $1.1$--$1.9\times$ relative to their in-distribution
errors across the four (cell, shift) pairs, while the
Gaussian process's grows by $2.1$--$4.9\times$ and the random forest's by $2.0$--$3.9\times$
(Fig.~\ref{fig:ood}). 

The heavily quantized direction is the one allocation exercises, hence an explanation for its
performance in allocation.

\begin{figure}[t]
  \centering
  \includegraphics[width=\linewidth]{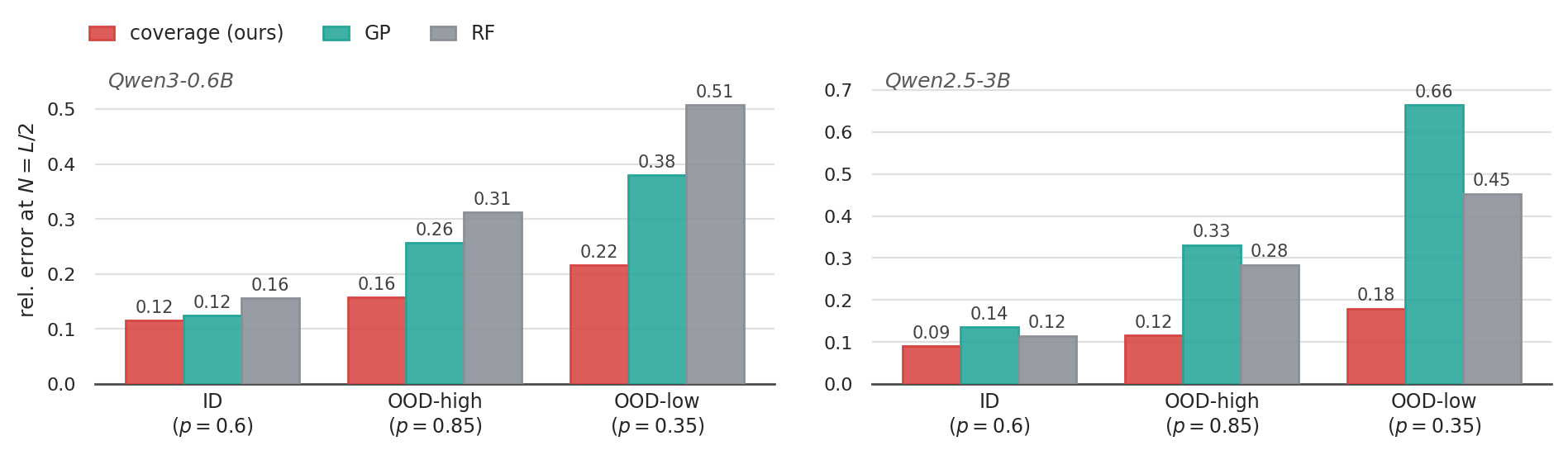}
  \caption{Fit at $p=0.6$, tested at $p=0.85$ and $p=0.35$. Coverage degrades $1.3$--$2.0\times$ and the additive fit $1.1$--$1.9\times$ relative to
  their in-distribution errors, while the
  Gaussian process degrades $2.1$--$4.9\times$ and the random forest $2.0$--$3.9\times$; the
  stationary priors of the surrogates do not encode saturation.}
  \label{fig:ood}
\end{figure}

\subsection{A diffusion transformer}\label{sec:dit}

The benchmark repeats on the Qwen-Image-2512 diffusion transformer ($L=60$ blocks, per-block NVFP4),
with $f(S)$ the mean squared error between the quantized and full-precision model outputs at the first
denoising step (per-budget results in App.~\ref{app:setup}, Table~\ref{tab:dit}). The measured
order-$\ge 2$ share is 0.36, above every LLM cell (0.016--0.30, Tables~\ref{tab:master}
and~\ref{tab:seeds}). At small budgets, coverage and random forest edge the other methods,
with performance converging across methods as budget size increases.

%% file: sections/08_allocation.tex
\section{Allocation at matched memory}\label{sec:allocation}

\subsection{Setup}\label{sec:allocsetup}

Given a memory budget in effective bits,
$\mathrm{eff} = \sum \mathrm{numel}\cdot\mathrm{bits} / \sum \mathrm{numel}$, the allocator assigns each
unit (a weight tensor, or a fused group that must share one serving format, App.~\ref{app:setup}) a
bit-width in $\{16,8,4\}$ to minimize predicted $\Delta\lce$ (Section~\ref{sec:allocsub4} extends the
alphabet below four bits). Under the coverage model, predicted damage increases
in the summed log-headroom costs $b_i$ of the demotions taken, so minimizing it under a bit budget is a
knapsack over those per-demotion costs (App.~\ref{app:knapsack}). 


\paragraph{Converting the baselines into allocators.} The
Section~\ref{sec:predictor} baselines are predictors, so each is fit on the same $N=300$
sampled configurations and run through the same knapsack over its predicted marginals (\code{fit-add} for
the fitted additive model, \code{gp-alloc} and \code{rf-alloc} for the surrogates). 
We fit the coverage and additive scores either with least squares or an Aumann-Shapley path integral.
The Aumann--Shapley score is the loss gradient averaged over increasingly quantized contexts, and under the
coverage model it equals the optimal in-context coefficient evaluated at an effective deployment
density (App.~\ref{app:as}). 

\subsection{Serving against vendor NVFP4 checkpoints}\label{sec:alloc27b}

On the dense Qwen3.6-27B we compare the additive allocation at 5.41 effective bits, derived from
measured per-unit step damages on a task-aligned calibration set (App.~\ref{app:setup}), against the
two public NVFP4 checkpoints, NVIDIA's at 6.31 effective bits and Unsloth's at 6.73, all served by
vLLM on one B200 at concurrency 128. Ours and Unsloth's checkpoints serve W4A4 NVFP4, quantizing
activations at runtime against per-block scales; NVIDIA's published checkpoint keeps its NVFP4
linears weight-only (W4A16). The additive configuration matches the vendor checkpoints
downstream (Table~\ref{tab:race27b}), and the smaller footprint is
realized as speed: it is the fastest system at every prompt/generation shape, $1.5$--$2.0\times$ the
bf16 throughput and 8--24\% above the Unsloth checkpoint (Fig.~\ref{fig:race27b}).

\begin{table}[t]
\centering\small
\caption{Qwen3.6-27B served accuracy (pass@1; AIME25 averaged over 8 seeds; thinking sampling).}
\label{tab:race27b}
\begin{tabular}{lcccc}
\toprule
 & eff.\ bits & AIME25 & GPQA & MMLU-Pro \\
\midrule
bf16 & 16 & 90.0 & 83.8 & 86.3 \\
NVIDIA NVFP4 & 6.31 & 90.8 & 67.2 & 85.8 \\
Unsloth NVFP4 & 6.73 & 90.4 & 82.3 & 86.7 \\
additive & 5.41 & 90.0 & 79.8 & 86.5 \\
\bottomrule
\end{tabular}
\end{table}

\begin{figure}[t]
  \centering
  \includegraphics[width=\linewidth]{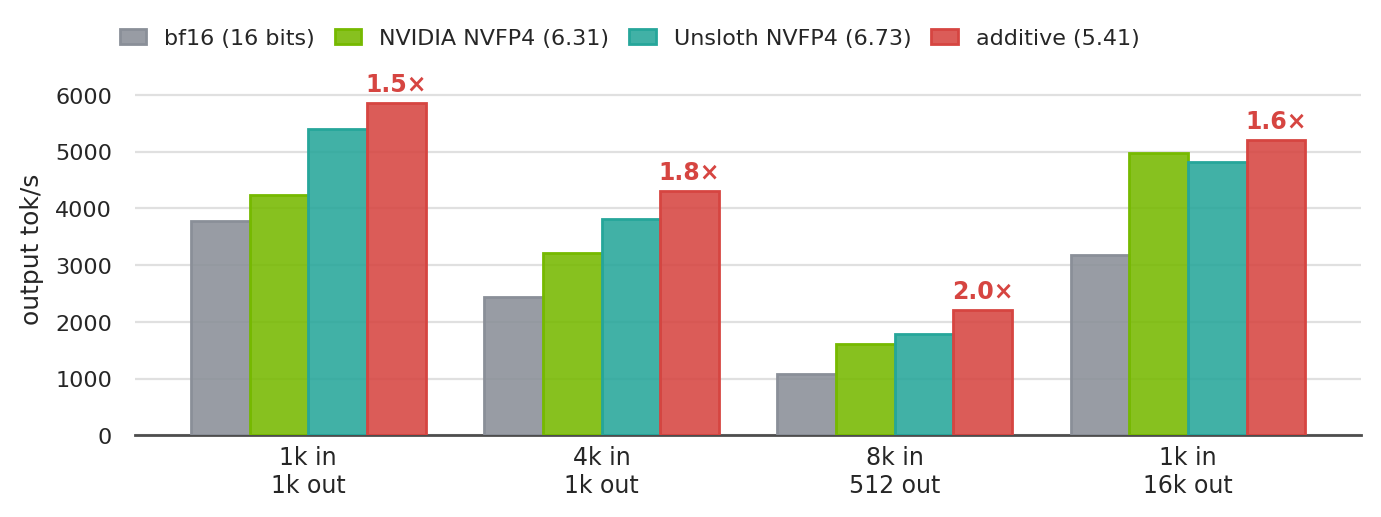}
  \caption{Qwen3.6-27B output throughput at four prompt/generation shapes,
   with the speedup over bf16 marked on the additive bars. The additive
  configuration is fastest at every shape at the lowest effective bits.}
  \label{fig:race27b}
\end{figure}

The comparison repeats at 753B. On GLM-5.2 we compare the additive allocation (4.84 stored weight
bits per parameter, 474~GB) against NVIDIA's published NVFP4 checkpoint (4.76 bits, 465~GB) and the
FP8 base model, served by SGLang on $4\times$B200 at concurrency 8. Both NVFP4 configurations are W4A4.
 Accuracy matches at matched
stored size (Fig.~\ref{fig:glm52}), and the additive configuration reads 28.3~GB of weights per
decoded token against the checkpoint's 45.9, so it decodes $1.19\times$ faster: the allocation
places its low-precision bits on the units decode reads most often.

\begin{figure}[t]
  \centering
  \includegraphics[width=0.5\linewidth]{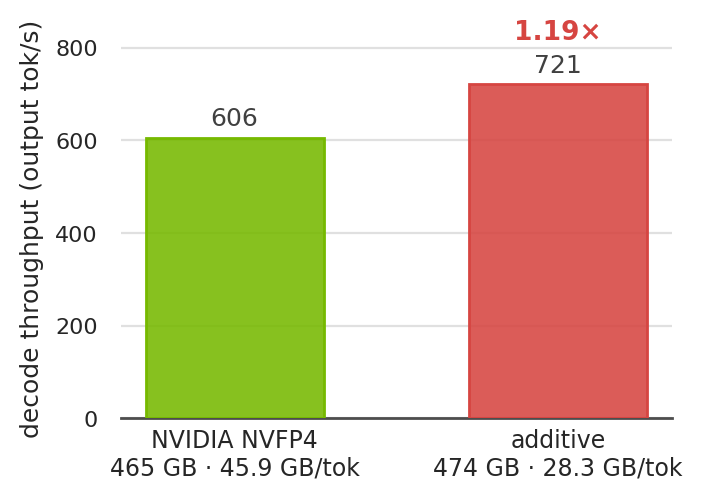}
  \caption{GLM-5.2 (753B) decode throughput, SGLang on $4\times$B200 at concurrency 8, with total
  stored weight and weight bytes read per decoded token (batch size 1) below each bar. Accuracy is
  matched: MMLU 0.842 (additive) against 0.838 (NVIDIA NVFP4) and 0.864 (the FP8 base), GPQA diamond
  0.470 against 0.455 and 0.465 (zero-shot letter-logit scoring; 228-question stratified MMLU subset,
  198 GPQA questions).}
  \label{fig:glm52}
\end{figure}
\subsection{MoE language models}\label{sec:allocmoe}

We benchmark our methods against the incumbent methods. The certificate accompanying these results
depends on the units it is measured over. On the eight whole-layer FP4 block lattices of Qwen3-30B
(Section~\ref{sec:tauval}) the measured order-$\ge 2$ share has median 0.013. At the allocation
granularity itself (per-linear units over the $\{16,8,4\}$ alphabet) no lattice is enumerable, and
the split-half cross-estimate (coefficient products of two disjoint halves, as in
App.~\ref{app:ordinal}) from the 300 sampled configurations of the fitted baselines, unbiased
but not constrained to be nonnegative, gives $-0.09 \pm 0.10$ on the 30B pool and $0.24 \pm 0.14$ on
the 235B pool. At $N=300$ the sampling error is of the order of the share itself, so the
allocation-granularity certificate is an order-of-magnitude statement, not a precise value. 
Per-method values are in
Tables~\ref{tab:30bb} and~\ref{tab:235c}, each with a paired test over the 16 shared evaluation
sequences, Holm-corrected within each budget row at the 5\% level.
On Qwen3-30B the additive- and coverage-scored allocators lead at every measured budget.



\begin{table}[t]
\centering
\caption{Qwen3-30B allocation comparison at matched effective bits, calibration KL on the task-aligned
corpus of App.~\ref{app:setup} (BFCL and RULER prompts).
$*$ marks methods
significantly different from AS-additive under a paired $t$-test over the 16 shared sequences,
Holm-corrected within each budget row at the 5\% level. $^\dagger$ marks MC-MoE's per-expert form, which assigns each
expert its own format and cannot be loaded by the fused serving kernels, so the fusion column is its
servable form. int8-outlier ranks units by an llm.int8-style activation-outlier severity score.
Marginal 95\% intervals on the entries span
$\pm.001$--$.007$.}
\label{tab:30bb}
\resizebox{\textwidth}{!}{%
\begin{tabular}{ccccccccccccc}
\toprule
eff & naive-unif & naive-routed & ModelOpt & int8-outlier & CoopQ (1{,}205 ev) & MC-MoE per-exp$^\dagger$ & MC-MoE fusion &
fit-add (300 ev) & gp-alloc (300 ev) & rf-alloc (300 ev) & AS-cov & AS-add \\
\midrule
4.2 & .0480$^*$ & .0367 & .0429$^*$ & .0428$^*$ & .0419$^*$ & .0359 & .0359 & .0372 & .0382$^*$ & .0538$^*$ & .0365 & \textbf{.0353} \\
4.5 & .0344 & .0344 & .0383$^*$ & .0327 & .0393$^*$ & .0318 & .0337 & \textbf{.0310} & .0319 & .0486$^*$ & .0317 & .0321 \\
5.0 & .0335$^*$ & .0344$^*$ & .0331$^*$ & .0310$^*$ & .0347$^*$ & .0275 & .0311 & .0284$^*$ & .0295$^*$ & .0443$^*$ & .0266 & \textbf{.0264} \\
6.0 & .0318$^*$ & .0344$^*$ & .0253$^*$ & .0253$^*$ & .0308$^*$ & .0232 & .0255 & .0229$^*$ & .0235$^*$ & .0421$^*$ & .0202 & \textbf{.0200} \\
8.0 & .0137$^*$ & .0344$^*$ & .0215$^*$ & .0177$^*$ & .0137$^*$ & .0192 & .0119 & .0185$^*$ & .0201$^*$ & .0371$^*$ & \textbf{.0106} & .0109 \\
\bottomrule
\end{tabular}}
\end{table}

The predictor skill of a black-box surrogate does not transfer: \code{gp-alloc} trails the AS and
fitted allocators by 3--18\% at the contended budgets (the extrapolation failure of
Section~\ref{sec:ood} in allocation form), and \code{rf-alloc} falls below naive-uniform.
CoopQ trails AS by $+19$ to $+54\%$
despite 1{,}205 sampled KL evaluations against AS's zero, and it scores whole layers rather than the
240 per-linear units of the other methods, the only granularity its measurement design affords, since
its permutation-walk cost scales with the unit count.

Table~\ref{tab:ge4served30} shows AIME-2025 scores for the evaluated methods on Qwen3-30B,
with every method between 66 and 76 against the bf16 anchor 73.3 and no significant ordering at any
budget. This finding survives both W4A4 and W4A16 quantization regimes.


\paragraph{Scale replication (Qwen3-235B, Table~\ref{tab:235c}).} The comparison was re-run at
$8\times$ scale on Qwen3-235B. The in-context family again carries the contended budgets: AS-additive
is best at eff 4.2, the fitted additive and coverage variants at 4.5--6.0, and every method outside
the family is significantly worse at eff 4.2, 4.5, and 6.0 (at 5.0, naive-uniform, ModelOpt, and
\code{gp-alloc} are not separated after the Holm correction). 
The comparison extends once more
to
GLM-4.6 (355B): a three-level $\{16,8,4\}$ allocation over its 43{,}364 units gives AS-coverage
calibration KLs of 0.059, 0.045, and 0.036 at effective bits 5, 6, and 7, against ModelOpt's 0.088,
0.071, and 0.045, margins of 34, 37, and 20\%, with AS-additive within 0.006 of AS-coverage at
every budget. 

\begin{table}[t]
\centering
\caption{Qwen3-235B, all methods measured jointly on the task-aligned corpus of
App.~\ref{app:setup}, with paired significance vs.\ AS-additive over the 16 shared sequences
($*$ = significant after Holm correction within each budget row at the 5\% level).}
\label{tab:235c}
\resizebox{\textwidth}{!}{%
\begin{tabular}{ccccccccccc}
\toprule
eff & naive-unif & naive-routed & ModelOpt & CoopQ (942 ev) & fit-add & fit-cov & gp-alloc & rf-alloc &
AS-cov & AS-add \\
\midrule
4.2 & 0.0196$^*$ & 0.0171$^*$ & 0.0222$^*$ & 0.0205$^*$ & 0.0186$^*$ & 0.0183$^*$ & 0.0185$^*$ & 0.0232$^*$ & 0.0172$^*$ & \textbf{0.0163} \\
4.5 & 0.0161$^*$ & 0.0163$^*$ & 0.0180$^*$ & 0.0190$^*$ & \textbf{0.0149} & 0.0158 & 0.0163$^*$ & 0.0211$^*$ & 0.0161$^*$ & 0.0151 \\
5.0 & 0.0144 & 0.0163$^*$ & 0.0151 & 0.0185$^*$ & \textbf{0.0129} & 0.0133 & 0.0141 & 0.0198$^*$ & 0.0132 & 0.0139 \\
6.0 & 0.0141$^*$ & 0.0163$^*$ & 0.0128$^*$ & 0.0157$^*$ & 0.0102 & \textbf{0.0094}$^*$ & 0.0116$^*$ & 0.0188$^*$ & 0.0112$^*$ & 0.0101 \\
8.0 & \textbf{0.0068}$^*$ & 0.0163$^*$ & 0.0096 & 0.0068$^*$ & 0.0083$^*$ & 0.0083 & 0.0099 & 0.0154$^*$ & 0.0090 & 0.0091 \\
\bottomrule
\end{tabular}}
\end{table}



\subsection{Allocation below four bits}\label{sec:allocsub4}

Below four bits the base quantizer decides whether allocation has anything to work with: under
round-to-nearest quantization, every allocator we measured collapses at every sub-4 budget regardless of model
scale, so we use an error-compensating base. We construct a
per-unit GPTQ \citep{frantar2022gptq} ladder on Qwen3-30B, quantizing every unit at 4, 3, and 2 bits
(weight-only; details in App.~\ref{app:setup}). 




In Table~\ref{tab:sub4kl}, the additive allocator's KL sits $3.0$--$6.6\times$ below ModelOpt's at
the servable budgets ($\ge 2.75$, established by the serving results below). CoopQ and the isolated
ranking, which reorders the additive allocator's own per-unit damages, occupy the same tier, while
MC-MoE and int8-outlier sit above ModelOpt. 


The construction replicates at scale on Qwen3-235B-A22B (Table~\ref{tab:sub4b235}): of the three
constructed allocations the additive one holds the lowest calibration KL at every sub-4 budget,
preserves reasoning when served at
effective bits 3.5, and degrades through 3.0.
 At effective
bits 2.9 the additive scoring rule appears on both sides of the survival boundary: derived on the
evaluation-task corpus itself, the additive and coverage allocations remain functional (42.9 and
41.2 AIME-2025), while the additive allocation of identical memory derived on a separately measured
corpus collapses (2.1, as does the random mix at 5.0). The gap opens before the boundary (58.3
against 39.6 at effective bits 3.0), so at this scale the derivation corpus, not the scoring rule,
sets the usable budget; every arm collapses at 2.75.

\input{generated/tab_sub4_235b}

\input{generated/tab_sub4kl}

Serving these configurations tests whether the calibration-KL margins are realized as served
accuracy. Table~\ref{tab:sub4code} reports HumanEval at both scales, and the full reasoning grid is
Table~\ref{tab:sub4served30} (App.~\ref{app:setup}). The failure mode has a common boundary, a
\emph{termination threshold}: above a model-specific calibration KL a configuration stops emitting
its end-of-sequence token, every generation runs to the token cap, and no answer is produced. On the
30B grid the boundary sits near 0.45 nats (\code{rf-alloc} at KL 0.449 collapses on AIME-2025 while
the additive configuration at 0.415 continues to score, Table~\ref{tab:sub4served30}), and the
threshold governs termination rather than accuracy. At effective bits 2.9 the four HumanEval methods
order exactly as their calibration KLs (0.215, 0.228, 0.236, and 1.042 giving 73.2, 44.5, 38.4, and
0.0). The accuracy gaps do not scale with the KL gaps:
CoopQ sits within 0.013 nats of the additive configuration yet 29 points below it, so which units
receive the 2-bit assignments
changes served accuracy at fixed memory and near-fixed KL. At 2.75 the ordering itself breaks: the
uniform mix has the lowest KL (0.319 against additive's 0.415) yet scores 19.5 against 54.9.

\begin{table}[t]
\centering\small
\caption{Served HumanEval accuracy (pass rate on base tests, all 164 problems, greedy decoding).
Cells at 0.0 are collapsed
configurations, with all or nearly all generations reaching the 8{,}192-token cap without terminating.
The two columns per model share one anchor measurement. The 235B CoopQ configurations
were not constructed due to measurement cost.}
\label{tab:sub4code}
\begin{tabular}{lcccc}
\toprule
 & 30B, eff 2.9 & 30B, eff 2.75 & 235B, eff 2.9 & 235B, eff 2.75 \\
\midrule
bf16 anchor & 83.5 & 83.5 & 88.4 & 88.4 \\
additive & \textbf{73.2} & \textbf{54.9} & \textbf{53.0} & \textbf{7.9} \\
CoopQ & 44.5 & 6.1 & --- & --- \\
uniform mix & 38.4 & 19.5 & 0.0 & 0.0 \\
ModelOpt & 0.0 & 0.0 & 0.0 & 0.0 \\
\bottomrule
\end{tabular}
\end{table}

%% file: generated/tab_sub4_235b.tex
\begin{table}[t]
\centering\footnotesize
\caption{Qwen3-235B below four bits on the evaluation-task corpus. Top: calibration KL of the
three directly constructed allocations (same ladder and corpus protocol as
Table~\ref{tab:sub4kl}). Bottom: served accuracy, AIME-2025 (8 seeds) / GPQA diamond /
MMLU-Pro, for the five listed configurations (GPTQ weight-only checkpoints); bf16 anchors 80.0/71.7/82.9.
The additive allocation preserves reasoning at 3.5 effective bits and degrades through 3.0;
every earlier-derived method collapses by 2.9. The rows marked eval.-task deriv.\ are
allocations derived on the evaluation-task corpus itself. 
Both remain functional at 2.9 (42.9 and 41.2 AIME with no
generation reaching the token cap) The surviving configurations are those whose calibration KL (0.44) sits below the
termination threshold of Section~\ref{sec:allocsub4}.}
\label{tab:sub4b235}
\begin{tabular}{lcccc}
\toprule
calibration KL & eff 3.5 & eff 3.0 & eff 2.9 & eff 2.75 \\
\midrule
additive & 0.167 & 0.292 & 0.435 & 1.162 \\
uniform & 0.217 & 0.297 & 2.006 & 4.674 \\
ModelOpt & 1.477 & 5.642 & 6.274 & 7.462 \\
\midrule
served & & & & \\
additive & 73.3/68.7/81.8 & 39.6/52.0/69.6 & 2.1/31.8/41.0 & 0.0/22.2/10.0 \\
additive (eval.-task deriv.) & 70.4/66.7/81.4 & 58.3/60.1/77.1 & 42.9/48.0/66.8 & 0.0/18.7/11.6 \\
coverage (eval.-task deriv.) & 71.7/66.7/80.9 & 59.6/62.6/76.3 & 41.2/44.4/66.7 & 0.0/25.3/10.9 \\
uniform & 66.2/65.7/80.6 & 60.4/61.1/77.2 & 5.0/16.7/21.3 & 0.0/0.0/0.2 \\
ModelOpt & 0.0/18.2/8.7 & 0.0/2.0/1.0 & 0.0/1.0/0.3 & 0.0/0.0/0.2 \\
\bottomrule
\end{tabular}
\end{table}

%% file: generated/tab_sub4kl.tex
\begin{table}[t]
\centering\footnotesize
\caption{Sub-4 allocation on the GPTQ ladder (Qwen3-30B), calibration KL on the evaluation-task
corpus (96 MBPP, 64 gsm8k-train, 64 MATH, 64 MMLU-Pro validation prompts; evaluation half, 16
shared sequences). Every method is derived and measured on this corpus. $*$ marks methods
significantly different from the additive allocator under a paired $t$-test over the 16 shared
sequences, Holm-corrected within each budget row at the 5\% level. Parentheticals give each
method's measurement cost: \emph{meas.}\ counts isolated per-unit damage measurements on the
ladder (additive prices each unit's $4\to3$ and $3\to2$ steps; isolated-rank ranks by the same
damages), \emph{cfg.}\ counts sampled mixed configurations the fitted, GP, and RF predictors are
fit on, and CoopQ's count is its permutation-walk evaluations.}
\label{tab:sub4kl}
\resizebox{\textwidth}{!}{%
\begin{tabular}{ccccccccccc}
\toprule
eff & additive & CoopQ & fitted & gp & rf & isolated-rank & uniform & ModelOpt & MC-MoE & int8-outlier \\
 & (96 meas.) & (376 meas.) & (40 cfg.) & (40 cfg.) & (40 cfg.) & (192 meas.) & & & & \\
\midrule
3.5 & 0.086 & 0.092 & 0.161$^*$ & 0.178$^*$ & 0.176$^*$ & \textbf{0.082}$^*$ & 0.099$^*$ & 0.300$^*$ & 0.271$^*$ & 0.386$^*$ \\
3.25 & \textbf{0.107} & 0.110 & 0.214$^*$ & 0.223$^*$ & 0.287$^*$ & 0.107 & 0.124$^*$ & 0.620$^*$ & 0.528$^*$ & 0.703$^*$ \\
3.0 & \textbf{0.145} & \textbf{0.145} & 0.293$^*$ & 0.337$^*$ & 0.449$^*$ & \textbf{0.145} & \textbf{0.145} & 0.952$^*$ & 1.098$^*$ & 1.110$^*$ \\
2.9 & 0.215 & 0.228$^*$ & 0.322$^*$ & 0.350$^*$ & 0.548$^*$ & \textbf{0.191}$^*$ & 0.236$^*$ & 1.042$^*$ & 1.222$^*$ & 1.229$^*$ \\
2.75 & 0.415 & 0.354$^*$ & 0.422 & 0.412 & 0.849$^*$ & 0.389 & \textbf{0.319}$^*$ & 1.255$^*$ & 1.366$^*$ & 1.370$^*$ \\
2.5 & 0.854 & \textbf{0.631}$^*$ & 0.715$^*$ & 0.710$^*$ & 1.445$^*$ & 1.026$^*$ & 0.798$^*$ & 1.605$^*$ & 1.695$^*$ & 1.736$^*$ \\
2.3 & 1.563 & \textbf{1.122}$^*$ & 1.298$^*$ & 1.349$^*$ & 2.121$^*$ & 1.627$^*$ & 1.649$^*$ & 2.079$^*$ & 2.215$^*$ & 2.215$^*$ \\
2.1 & 2.482 & \textbf{2.285}$^*$ & 2.472 & 2.472 & 2.984$^*$ & 2.482 & 2.617$^*$ & 2.967$^*$ & 3.057$^*$ & 2.967$^*$ \\
\bottomrule
\end{tabular}}
\end{table}

%% file: sections/09_scope.tex
\section{Scope}\label{sec:scope}

\paragraph{The activation channel carries the structure.} Measuring the damage set function on the
same layers with weights only (W4A16), activations only (W16A4), and both (W4A4) quantized, the weight
channel is non-saturating (isolate-and-sum error $\approx 0.10$, $\sum_i f(\{i\})/f(\text{all}) \approx 1.0$)
while the activation channel saturates strongly (isolate-and-sum 2.80 against coverage's 0.098 on the
3B W16A4 cell, Table~\ref{tab:seeds}). The saturation therefore lives in the activation channel,
which is why the method targets activation and W4A4 quantization.

\paragraph{Outlier-borne coupling under coarse activation scaling.} At the deployed NVFP4 per-block
format the order-$\ge 2$ share is 0.07 and the additive and coverage models attain the lowest errors,
while under per-tensor activation scales (a coarse setting that is not deployed) the share reaches
0.67 on the 0.6B certification lattice of Section~\ref{sec:ordinal} (the per-tensor cell of
Table~\ref{tab:ordinal}; the separately measured per-tensor lattice of Table~\ref{tab:certest}
puts its own share at 0.81), and the
nonparametric surrogates attain lower error, a breakdown the certificate forecasts. The coupling is
outlier-borne: localizing the scale (per-block-256)
drops the share to 0.01, clipping the top 0.1\% of activations drops it to 0.16, and SmoothQuant-style
scaling before quantizing drops it to 0.46. The fitted break-rates themselves are depth-structured
rather than outlier-driven (on Qwen3-0.6B they correlate with depth at Spearman $\rho = +0.68$ and
with output RMS at $\rho = +0.50$, but not with activation kurtosis). 

\paragraph{Measurement protocol.} Headline cells use 3 seeds (std $\le 0.03$), and the per-tensor
W4A4 isolate-and-sum failure is also 3-seed stable (0.6B $20.6\pm 0.3$, 3B $14.1\pm 0.3$), so the
rankings are robust.


%% file: sections/11_future.tex
\section{Future work}\label{sec:future}

\paragraph{Data randomness in the theory.} We treat $f(S)=\Delta L$ as deterministic, but it is an
expectation over the calibration distribution, with data-dependent break-rates. A two-source variance
decomposition, layer-selection variance (the $p$-biased Walsh account) plus data-sampling variance, would
give a principled law connecting calibration size to prediction variance and a data-aware certificate.

\paragraph{Hardware- and framework-aware allocation.} Effective
bits equal memory is the wrong cost for realized speedup, which is a property of the serving stack: fusion
granularity sets the alphabet (one format per fused qkv or expert block), 
kernel availability determines if certain bit-widths (e.g. sub-4) are servable.
Co-designing the cost function of the allocator with the serving stack would lead to better realized speed gains.



\paragraph{Composing allocation with outlier and rotation methods.} Allocation
chooses how many bits each unit gets, while rotations (QuaRot, SpinQuant, QuIP\#), SmoothQuant, and
llm.int8() change how much a unit is damaged at a given bit-width, 
Our framework operates on
whatever base quantizer produces $f(S)$, so the two compose: allocate over a rotated or outlier-handled
base. 

%% file: sections/12_conclusion.tex
\section{Conclusion}\label{sec:conclusion}

Viewed through its signed M\"obius expansion, low-bit quantization damage appears to carry irreducible
structure at every order, yet in the Walsh basis roughly 90\% of its variance is order-1, and a
saturating coverage model with one parameter per layer generates both behaviors exactly, the divergent
series and the concentrated spectrum alike. Because the order-$\ge 2$ share equals the best additive
predictor's mean-squared error as a fraction of the variance (\thmone), every result is paired with a
certificate of when additive machinery suffices for prediction, measured on exact lattices or forecast in closed form
from fitted break-rates, and the ordinal defect places the damage within a few percent of a monotone
function of a per-layer sum, so the residual interaction is largely the curvature of the loss scale
rather than structure among layers.

The practical consequences follow. Sensitivities measured in isolation overstate configuration losses
by 71--376\%, by a closed-form inflation factor. The fitted coverage and additive models price a
configuration's loss before deployment, are best or tied against black-box surrogates at small
measurement budgets, and degrade at most $2\times$ under extrapolation to differently quantized
configurations where Gaussian-process surrogates degrade $2$--$5\times$. One Aumann--Shapley path
integral recovers the optimal additive coefficient at a layer-specific effective density, and the
ordering of layers exactly, with no sampled configurations. Used as allocators
at matched memory, the same models
attain the lowest KL among the compared allocators, $17$--$21\%$ below the production ModelOpt baseline
on Qwen3-30B, with consistent margins at 235B and 355B. The
margins
appear where per-unit sensitivities are heterogeneous and budgets are contended, and they vanish in the
served $\ge 4$-bit regime, which is near-lossless by construction. Below four bits, where an RTN
base collapses every allocator, an error-compensating base restores the choice, and allocation decides
whether the served model produces terminating generations at all. 

%% file: appendix/a_link.tex
\section{The model class $g\bigl(\sum_i b_i\bigr)$: canonicity, and details for the ordinal
analysis}\label{app:link}

The Walsh--Hadamard expansion of $f$ is exactly the functional-ANOVA decomposition on the
$p$-biased Boolean cube: order 0 is the mean, order 1 the per-layer main effects, order $k$ the $k$-way
components. The tempting reading, that order-$\ge 2$ energy measures interaction among layers, does
not follow, because ANOVA additivity is additivity in the loss values and the loss saturates, so a model
that is additive in the loss must spend order-$\ge 2$ energy merely to represent the curvature of
saturation, even with no coupling among layers. Order-$\ge 2$ energy therefore conflates real coupling
with the shape of a scalar transformation. The class $L(S)=g\bigl(\sum_{i\in S} b_i\bigr)$, additive after
a strictly increasing $g$, partially disentangles the two, since if a single such $g$ linearizes $f$ (that is,
$g^{-1}\circ f$ is order-1), the order-$\ge 2$ energy was produced by $g$ rather than by interaction among
layers.
The residual test that survives this critique is the order-$\ge 2$
energy after the best transformation, measured in Section~\ref{sec:ordinal}, with protocol details below.

\paragraph{Why searching over transformations is canonical.} The class $g\bigl(\sum_i b_i\bigr)$ is not a
parametric convenience. It is exactly the class of damage functions that accumulate one layer at a time by
a fixed rule, $f(S\cup\{i\}) = f(S) \oplus d_i$, with a per-layer increment $d_i$ that does not depend on
$S$ and an operation $\oplus$ that is associative, continuous, and strictly increasing with identity $0$.
By the Acz\'el--Ling
representation theorem, every such $\oplus$ is conjugate to ordinary addition: there is a strictly
increasing generator $g$ with $x\oplus y = g\bigl(g^{-1}(x)+g^{-1}(y)\bigr)$, unique up to scale
\citep{aczel1966fe,ling1965assoc}. Accumulating damage by any rule with these properties is therefore
equivalent to $f$ being additive after some strictly increasing transformation, and the search of
App.~\ref{app:knapsack} ranges over such
rules. Coverage is the instance $x\oplus y = x+y-xy/c$, the probabilistic sum (noisy-OR)
\citep{pearl1988probabilistic,klement2000tnorms}, whose generator is the log-headroom
$g^{-1}(f) = -\log(1-f/c)$. 
The essential interaction of Section~\ref{sec:ordinal} is then precisely what no rule with
these properties can absorb,
namely the ordinal defect.

\subsection{Details for Section~\ref{sec:walsh}: the ordinal certification and the noise-corrected
energies}\label{app:ordinal}

\begin{proof}[Proof of \thmtwo]
($\Rightarrow$) A strictly increasing $g$ preserves and reflects strict order. ($\Leftarrow$) On the finite
set $V=\{b\cdot x\}$ define $g_0(b\cdot x):=f(x)$. This is well-defined (if $b\cdot x=b\cdot y$, the order
condition forces $f(x)=f(y)$) and strictly increasing (if $b\cdot x<b\cdot y$ then $f(x)<f(y)$), and it
extends piecewise-linearly to a strictly increasing $g$ on $\mathbb{R}$.
\end{proof}

The claim is the finite binary case of additive conjoint measurement
\citep{lucetukey1964conjoint,debreu1960topological,scott1964measurement}, and is stated for
completeness.

\paragraph{Certification protocol: the estimand and the noise floor.} This and the next three
paragraphs detail
how the ordinal defect of Section~\ref{sec:ordinal} is measured and certified. The estimand is the
draw-expected damage $\tilde f$: where the measured $f(S)$ of Section~\ref{sec:walsh} averages 16
stochastic-rounding draws, $\tilde f(S)$ is its expectation over draws, and all randomness below is over
those draws. The ranking constraints
of $\tilde f$ are the inequalities $b\cdot(x-y)>0$, one per ordered pair of configurations with
$\tilde f(x)>\tilde f(y)$. By
\thmtwo{}, an index satisfying every constraint exists exactly when $\tilde f$ is additive up to a strictly
increasing transformation, and the ordinal defect is
the smallest fraction of the constraints that any index must violate. The complication is that $\tilde f$
is known only up to the draw noise. Both
bounds below therefore share two devices. First, the draws are split into two halves, whose means
$\bar f_A$ and $\bar f_B$ are independent estimates of $\tilde f$ at every configuration, the split-half device
of reliability theory \citep{spearman1910correlation,brown1910some}. Second, a pair of
configurations contributes a constraint on a given half (is \emph{eligible} there) only if its gap on
that half exceeds twice its standard error. A half-mean at configuration $S$ has standard error
$\hat\sigma(S)/\sqrt{8}$, with $\hat\sigma(S)$ the sample standard deviation of the draws, so the gap
between configurations $x$ and $y$ has standard error
$\sqrt{\hat\sigma(x)^2 + \hat\sigma(y)^2}/\sqrt{8}$.

\paragraph{The upper bound: an exhibited index.} Because the defect is a minimum over indices, the
fraction of constraints violated by any single index bounds it from above. The tightest such bound would
come from the index minimizing the violated fraction, which is NP-hard to find \citep{amaldi1995maxfs},
so we exhibit instead the solution $(b, \xi)$ of the hinge relaxation
\[
\min_{b,\,\xi} \ \sum_p \xi_p \qquad \text{subject to} \qquad
b\cdot(x_p - y_p) \ \ge\ 1 - \xi_p, \quad \xi_p \ge 0 ,
\]
the pairwise-preference construction of ranking with slack variables \citep{joachims2002ranking}, here
without the norm term, so the problem is a linear program, fit on the eligible constraints of half A. The
fitted index's violated fraction is then evaluated twice: in-sample, on the half-A constraints it was fit
on, and cross-half, on half B's own eligible constraints, which the fit never saw. The two evaluations
play different roles. 
With noisy draws the reported number is instead an estimate of the index's violation rate
on $\tilde f$'s constraints, and the in-sample rate is biased low for that quantity. The
cross-half rate removes this adaptation, evaluating a fixed index on independently selected constraints.
We report both and quote the larger. On most cells the two agree to within about a tenth of a percentage
point.

\paragraph{The lower bound: certified witnesses.} The witnesses are rank flips, two contexts that order
the same two layers oppositely (the infeasibility witnesses of Section~\ref{sec:ordinal}). For a layer pair
$(i,j)$ and a context $S$ containing neither, the constraint between $\tilde f(S\cup\{i\})$ and
$\tilde f(S\cup\{j\})$
involves $b$ only through the sign of $b_i - b_j$, which is the same for every context. If, among
$\tilde f$'s constraints, $m$ contexts order $i$ above $j$ and $n$ order $j$ above $i$, then every additive index
violates at least $\min(m,n)$ of them. Summing $\min(m,n)$ over layer pairs, whose constraint families are disjoint,
counts constraints that every index must violate, so this count divided by the number of eligible
constraints is a lower bound on the defect, and by \thmtwo{} a
single opposed pair of contexts already refutes every increasing transformation. To certify a context
against noise, it must be eligible on half A and recur with the same sign on half B. The witness
count
of Table~\ref{tab:ordinal} is the sum over layer pairs of the smaller number of certified contexts of
each preference direction.

\paragraph{The matched-noise null.} The noise floor and the eligible constraint check still pass a small number of
false positives, so every statistic above, for both bounds, is read against a matched-noise null lattice
pushed through the identical pipeline. The null is the coverage model fitted to the measured means, which
is additive after an increasing transformation and so contains no true flips, with the 16 draws
regenerated as independent Gaussian samples centered on the fit, scaled by each configuration's measured
per-draw sample standard deviation $\hat\sigma(S)$. This is a parametric bootstrap under the hypothesis of no
essential interaction \citep[Section~6.5]{efron1993bootstrap}. The Gaussian form is an assumption, and a mild
one here, because every statistic consumes the draws only through the two half-means, each an average of
8 draws that are themselves means of per-token losses.
The null measures the false-positive rate of the whole procedure, and a cell
is certified only where the measured statistic exceeds its null.

\begin{table}[t]
    \centering
    \small
    \begin{tabular}{llccccccc}
      \toprule
      & & \multicolumn{2}{c}{Witnesses} & \multicolumn{2}{c}{Sign replication} & \multicolumn{2}{c}{LP upper bound (\%)} & $E^*$ (\%) \\
      \cmidrule(lr){3-4}\cmidrule(lr){5-6}\cmidrule(lr){7-8}\cmidrule(lr){9-9}
      Model & Activation scales & meas. & null & meas. & null & meas. & null & \\
      \midrule
      Qwen3-0.6B & per-block & 13 & 18 & 0.76 & 0.66 & 0.6 & 0.8--1.1 & 4.9 \\
      Qwen3-8B (fp32) & per-block & 5 & 10 & 0.67 & 0.61 & 0.5--0.6 & 0.7--1.1 & 0.2 \\
      Llama-3.2-3B & per-block & 56 & 6 & 0.80 & 0.69 & 1.2--1.4 & 0.3--0.4 & 1.7 \\
      Qwen3-30B-A3B & per-block & 21 & 7 & 0.79 & 0.53 & 0.5--0.8 & 0.4--0.8 & $-0.4$ \\
      \midrule
      Qwen3-0.6B & per-tensor & 74 & 18 & 0.95 & 0.77 & 1.6 & 2.5--2.8 & 3.5 \\
      \bottomrule
    \end{tabular}
    \caption{Certification of the ordinal defect on full $2^8$ lattices. 
    Only comparisons whose loss gap exceeds twice its standard error enter any
    count. Witnesses count is the sum over layer-pairs of the smaller number of contexts of each preference direction
    that sign-replicate across the two halves of the draws. Sign replication is the replicated fraction of half-A comparisons.
    LP upper bound is the fraction of eligible ranking constraints
    violated by the additive index fit on half A, evaluated in-sample on half A and cross-half on half B.
    The range spans the two evaluations, and the text quotes the larger. Every statistic is compared
    with the identical statistic on the matched-noise null, which contains no true flips. $E^*$ is the
    order-$\ge 2$ energy share remaining after the best strictly increasing transformation of the loss
    (the free-monotone estimator of App.~\ref{app:ordinal}).}
    \label{tab:ordinal}
  \end{table}

\paragraph{Per-model detail.} Table~\ref{tab:ordinal} reports every cell against its null. In the deployed
per-block regime, Qwen3-0.6B and Qwen3-8B sit at or below their nulls and are
statistically indistinguishable from zero, Qwen3-30B-A3B has a defect of at most $0.8\%$, at its null, and
Llama-3.2-3B carries a small genuine residue (56 witnesses against a null of 6). In the per-tensor regime
the witnesses exceed their null (74 against 18, at $0.95$ sign replication), which is the outlier
coupling of Section~\ref{sec:scope} appearing as certified rank reversals. An additive index nevertheless still
orders at least $98\%$ of the ranking constraints above the noise floor correctly. Nearly all of the apparent high-order energy in this
worst cell is therefore curvature of the loss coordinates, as seen by $E^\star$.

\paragraph{Removing the stochastic-rounding noise floor from measured order energies.} The split of the
draws serves a second, separate role for the order energies of Section~\ref{sec:walsh}, where the halves are
differenced rather than held out. Each measured
$f(S)$ averages $K$ stochastic-rounding draws and so equals the draw-expected damage $\tilde f(S)$
plus a mean-zero
Monte-Carlo error with variance $\sigma^2(S)/K$, where $\sigma^2(S)$ is the per-draw variance at
configuration $S$. The Walsh transform is linear, so the measured coefficients are likewise unbiased.
Their squares are not, since
$\E[(a+Z)^2] = a^2 + \Var(Z)$ for constant $a$ and mean-zero $Z$, so each measured order energy $W_k$
exceeds the energy of the draw-expectation by the summed error variances of its order-$k$ coefficients,
a floor
$\nu_k > 0$. The per-coefficient error variance is nearly the same for every basis function
(independent per-configuration noise is white in an orthonormal basis: exactly even at $p=\tfrac12$,
within a small per-order factor at $p=0.6$), so $\nu_k$ is close to proportional to the number
$\binom{L}{k}$ of order-$k$ basis functions. All but $L$ of the $2^L-1$ non-constant functions have
order $\ge 2$, so the floor's mass lands almost entirely in the order-$\ge 2$ energies and deflates the
measured order-1 share. To estimate it, we average two disjoint halves of the draws
separately, $\bar f_A$ and $\bar f_B$, and form the difference table
$d(S)=\tfrac12\bigl(\bar f_A(S)-\bar f_B(S)\bigr)$, in which the draw-expectation cancels exactly and a
pure mean-zero error of the same variance $\sigma^2(S)/K$ remains, so the order energies of $d$ satisfy
$\E[W^{(d)}_k]=\nu_k$. This requires only that the two halves are independent (fresh rounding draws per
configuration and draw). The corrected
energies reported in Sections~\ref{sec:walsh} and \ref{sec:tauval} are
$\max(W_k - W^{(d)}_k,\,0)$. Subtracting replicate-estimated error variance from an observed second
moment is the classical variance-components device of the analysis of variance
\citep[Ch.~7]{scheffe1959anova}, applied here to split-half replicates
\citep{spearman1910correlation,brown1910some}.

\paragraph{Estimating the transformation-independent energy floor.} For strictly increasing $g$ let $E(g)$
be the relative order-$\ge 2$ Walsh energy of the transformed damage $g^{-1}\circ \tilde f$. By \thmone{}
applied
to $g^{-1}\circ \tilde f$, $E(g)$ is the fraction of the transformed damage's variance that the best additive
model leaves unexplained, and $E^* := \inf_g E(g)$ is the floor of that fraction over all strictly
increasing transformations. A small $E^*$
therefore certifies that $g^{-1}\circ \tilde f$ is nearly additive for some strictly increasing $g$, so the
measured high-order energy is curvature of the loss coordinates rather than coupling among layers.
Estimation must respect two hazards: Monte-Carlo noise spreads nearly evenly across the Walsh basis, as
above (247 of 255 basis functions have order $\ge 2$, so raw noise masquerades as high-order structure),
and $g^{-1}$ is nonlinear, amplifying noise where the inverse is steep. The protocol therefore
splits the 16 rounding draws into three parts and fits every transform on part 1 only. The parametric
families are fit by nonlinear least squares, and the free monotone transform by alternating a weighted
degree-$\le 1$ fit with isotonic regression against the ordering of $f$'s values, which searches all
increasing transformations at once and is the fit that estimates $E^*$: isotonic regression returns the
least-squares best monotone nondecreasing fit to a scatter, computed exactly by the
pool-adjacent-violators algorithm \citep{ayer1955pava,barlow1972isotonic}. Each squared Walsh coefficient of
the transformed damage is then estimated by multiplying the coefficient computed from part 2 by the
coefficient computed from part 3, an unbiased estimate of the squared coefficient because the two parts'
noises are independent and mean zero. The product is unbiased but not constrained to be nonnegative,
so a cell whose true share is near zero can estimate slightly negative, as the $-0.4\%$ entry of
Table~\ref{tab:ordinal} does. The headline numbers appear in Section~\ref{sec:ordinal}, and the
per-cell values in the last column of Table~\ref{tab:ordinal}. 


\paragraph{What depends on the measure.} Being exactly degree-$\le 1$, raw or after a transformation, is
an algebraic property of $f$ (of its unique multilinear extension), independent of the product measure,
since the span of the degree-$\le k$ $\mup$-Walsh functions is the same subspace for every $p$. What depends
on $p$ is the accounting: the best degree-$\le 1$ projection, its coefficients $w_i(p)$, and the
order-$\ge 2$ energy are all priced under the chosen measure. Two consequences follow. \thmone's
certificate is stated, and must be read, at the deployment $\mup$. The wrong anchor of Section~\ref{sec:anchor} is
ultimately a wrong measure, the isolated slope $s_i$ being the $p\to 0$ limit of the deployment-point slope
$w_i(p)$. 

%% file: appendix/b_as.tex
\section{Aumann--Shapley recovers the in-context coefficient at an effective density (proof under
coverage)}\label{app:as}

There are three per-layer slopes, all gradients of the saturating loss at different anchors: the
empty-anchor slope \citep[HAWQ, isolated;][]{hawq2019,dong2020hawqv2}, $s_i = f(\{i\}) = c\,a_i$, inflated
by the factor of Section~\ref{sec:anchor}; the deployment-point slope, which is the optimal additive coefficient,
$w_i = c\,a_i\prod_{j\ne i}(1 - p\,a_j)$, optimal but requiring a fit at the $p$-biased operating point;
and the path-average slope \citep[Aumann--Shapley;][]{aumann1974values,sundararajan2017ig}, which is
fit-free,
\[
  AS_i = c\,a_i\int_0^1 \prod_{j\ne i}(1 - t\,a_j)\,dt .
\]
Completeness, the property that the coefficients sum exactly to the total damage, is the classical
efficiency property of the Aumann--Shapley value and does not require the coverage form.
Under coverage it is immediate, since
$\frac{d}{dt}\bigl[c - c\prod_j(1 - t\,a_j)\bigr] = \sum_i c\,a_i\prod_{j\ne i}(1 - t\,a_j)$, and
integrating over $t \in [0,1]$ gives $\sum_i AS_i = f(\text{all})$ with no remainder (measured residual
on real models 0.06\%). To state what the path integral recovers, write
$w_i(q) = c\,a_i\prod_{j\ne i}(1 - q\,a_j)$ for the optimal additive coefficient when the deployment
density is $q$.

\begin{customprop}{B.1}\label{prop:asrec}
Let $f$ be the coverage model with break-rates $a_1, \ldots, a_L \in [0,1)$ and let $p \in (0,1)$.
(i) For every layer $i$ there is an effective density $t_i^{*} \in (0, \tfrac12]$ such that
$AS_i = w_i(t_i^{*})$, so the fit-free path integral equals the optimal additive coefficient evaluated
at $t_i^{*}$ in place of $p$. (ii) For all layers $i$ and $k$, $AS_i > AS_k$ if and only if
$w_i > w_k$, if and only if $a_i > a_k$, so the path-average and deployment-point coefficients order
the layers identically at every deployment density.
\end{customprop}

\begin{proof}
(i) Write $g_i(t) = \prod_{j\ne i}(1 - t\,a_j)$, so that $AS_i = c\,a_i\int_0^1 g_i$ and
$w_i(q) = c\,a_i\,g_i(q)$. If $a_j = 0$ for every $j \ne i$ then $g_i \equiv 1$ and any $t_i^{*}$
serves. Otherwise $g_i$ is positive, continuous, strictly decreasing, and convex on $[0,1]$: with
$h_j(t) = -a_j/(1 - t\,a_j)$, differentiating the product gives $g_i' = g_i\sum_{j} h_j$ and
$g_i'' = g_i\sum_{j \ne l} h_j h_l \ge 0$, where the sums run over indices other than $i$ and the
$h_j$ agree in sign. Jensen's inequality for the uniform measure on $[0,1]$ gives
$\int_0^1 g_i \ge g_i(\tfrac12)$, strict monotonicity gives $\int_0^1 g_i < g_i(0)$, and the
intermediate value theorem then yields a unique $t_i^{*} \in (0, \tfrac12]$ with
$g_i(t_i^{*}) = \int_0^1 g_i$. Multiplying by $c\,a_i$ gives $AS_i = w_i(t_i^{*})$.
(ii) Factor out the full survival product $G(t) = \prod_{j}(1 - t\,a_j)$, positive on $[0,1]$ because
every $a_j < 1$. Then
\[
  w_i = c\,G(p)\,\frac{a_i}{1 - p\,a_i},
  \qquad
  AS_i = c\int_0^1 G(t)\,\frac{a_i}{1 - t\,a_i}\,dt ,
\]
and for each fixed $t \in [0,1]$ the map $x \mapsto x/(1 - t\,x)$ is strictly increasing on $[0,1)$.
Hence $a_i > a_k$ implies $w_i > w_k$ and $AS_i > AS_k$, while $a_i = a_k$ forces equality in both, and
the two orderings coincide with the ordering by break-rate.
\end{proof}

Part (i) places the fit-free
coefficient in the same family as the fitted one, evaluated at the effective density $t_i^{*}$ rather
than at the deployment density $p$ (since $t_i^{*} \le \tfrac12 < p$ at our $p=0.6$, the path-average
coefficient sits above the deployment-point one whenever other layers have nonzero break-rates). Part
(ii) shows that this difference in evaluation point never reorders layers. What remains numerical is
the magnitude agreement between $AS_i$ and
$w_i(p)$: the correlation is 0.9997 on coverage-generated data and 0.785 on real models, while the
empty-anchor $s_i$ is inflated by the factor ${\approx}\,e^{p\sum_j a_j}$. Since $s_i = c\,a_i$
is itself increasing in the break-rate, part (ii) extends to the empty anchor under the coverage model,
so the failure of isolated sensitivity documented in Section~\ref{sec:anchor} is a failure of magnitude rather
than of order. 

%% file: appendix/c_knapsack.tex
\section{The log-headroom knapsack, and the generator the data selects}\label{app:knapsack}

\paragraph{Knapsack certificate.} Because coverage is linear in $b$, a service-level constraint of the
form
$\Delta\lce \le \varepsilon$ maps to a scalar budget $\sum_{i\in S} b_i \le B$ with
$B = -\log(1 - \varepsilon/c)$. The cheapest configuration meeting the constraint protects the set $P$
minimizing protection
cost subject to $\sum_{i\notin P} b_i \le B$ is a 0/1 knapsack, solvable exactly by
dynamic programming. The deliverable is a predicted $\Delta\lce$ for the chosen configuration,
quotable before committing ($\le 4\%$ error, App.~\ref{sec:interp}), which a ranking structurally cannot
produce and which a Gaussian process prices with $1.6$--$2.9\times$ coverage's error near the heavily
quantized corner where optimal configurations live (Section~\ref{sec:ood}).

\paragraph{Which generator the data selects.} The ceiling $c$ and membership in the class
$g\bigl(\sum_i b_i\bigr)$ are established empirically, by the marginal-damage measurements of
Section~\ref{sec:coverage} and the ordinal certification of Section~\ref{sec:ordinal}, while the
exponential generator (coverage/noisy-OR), which is what makes the headroom consumption multiplicative, is
justified only by
fit quality. Refitting the lattices with candidate strictly increasing generators (all with $L+1$
parameters, $p=0.6$) shows that prediction is insensitive to the choice (held-out relative error
${\approx}\,0.064$--$0.071$ for all), but the slope calibration, the ratio
$w_{\text{model}}/w_{\text{data}}$ of model-implied to measured in-context slopes, selects one: across
the five non-degenerate of six (model, corpus) cells, the probit-$\sqrt{\cdot}$ ratio is 0.93--1.03,
while the exponential ratio spans 0.86 down to ${\approx}0$ (exp, probit, tanh, and rational sit at
0.58--0.67 on Qwen3-0.6B). The saturation rule
best supported by the data is therefore the probit-$\sqrt{\cdot}$ generator
$g(u) = c\,\operatorname{erfc}\bigl(1/\sqrt{2u}\bigr)$, the probability that zero-mean Gaussian
corruption of variance $u$ exceeds a fixed threshold, scaled by the ceiling; the exponential coverage
model is its tractable approximation. 


%% file: appendix/d_setup.tex
\section{Experimental setup}\label{app:setup}

We collect the protocol shared across Sections~\ref{sec:walsh}--\ref{sec:scope} here. Section-specific
deviations are noted in place.

\paragraph{Models.} \emph{LLMs}: Qwen3-0.6B, Qwen2.5-3B, Qwen3-4B, Qwen3-8B, Llama-3.2-3B,
 with the per-linear order-share cells of
Section~\ref{sec:walsh} extending to Qwen3-14B, Qwen3-32B, Llama-3.3-70B, Qwen2.5-72B, and
gpt-oss-120B; Qwen3-30B-A3B, Qwen3-235B-A22B, and GLM-4.6 (355B
MoE) for allocation at scale; Qwen3.6-27B for the vendor-checkpoint serving comparison,
with damage measured and ModelOpt calibrated on a chat-templated task corpus (96 MATH-train, 64
gsm8k-train, 96 MMLU-Pro validation prompts); Qwen3-235B and GLM-5.2 (753B) for the
scale-robustness checks.
\emph{Diffusion}: Qwen-Image-2512 DiT ($L=60$ transformer blocks) for the diffusion
prediction results. All weights are the public checkpoints.

\paragraph{Quantization formats.} The measurement regime throughout is W4A4 with NVFP4 per-block
scales. We also analyze W16A4 (activation-only, isolating the channel that carries the
non-additivity) and per-tensor activation scaling (a coarse setting that induces interactions of every
order among layers). The allocation alphabet is $\{16, 8, 4\}$ bits (bf16 / FP8 / NVFP4) for the
servable studies and is extended to $\{16,8,4,2,1\}$ for the
below-4-bit collapse study. Serving uses three pipelines. NVIDIA ModelOpt PTQ with TensorRT-LLM
serves the $\ge 4$-bit MoE deployment recipes, and vLLM the vendor-checkpoint comparison of
Section~\ref{sec:alloc27b}; these paths serve W4A4 ModelOpt exports with runtime activation
quantization on the NVFP4 units (NVIDIA's published 27B checkpoint, which keeps its NVFP4 linears
weight-only, is the exception), as does the SGLang-served GLM-5.2 comparison. 
\paragraph{The damage set function and the lattice.} For a contiguous block of $L$ layers we form
$f(S)=\lce(S\ \text{quantized}) - \lce(\text{bf16})$ by running the model with exactly the units in $S$
quantized. Every exact lattice in the paper measures such a block centered at the network's mid-depth (block
offset $\lfloor(L'-L)/2\rfloor$ in an $L'$-layer network, so the middle eight of Qwen3-0.6B's 28
layers), except the Qwen3-30B-A3B extension of Fig.~\ref{fig:tau}, which repeats the measurement at
eight block positions. Restricting to a small block is what makes the $2^L$ enumeration feasible, and
the enumerated lattice is a reference against which sampled estimates, fitted models, and the
noise corrections of App.~\ref{app:ordinal} are certified. The full-network cells
(Section~\ref{sec:walsh}, the prediction benchmark, and the allocation studies) instead quantize units
anywhere in the network, where enumeration is impossible, and sample configurations from the $p$-biased
measure $\mup$. Each $f(S)$ is
averaged over stochastic-rounding draws (3 seeds for the headline sampled cells, std $\le 0.03$; 8--16
draws for the exact lattices of Section~\ref{sec:walsh}, whose noise treatment is in App.~\ref{app:ordinal}).
The
Walsh/M\"obius transforms are computed from measured $f(S)$. The order-$\ge 2$
energy certificate is the measured $\sum_{k\ge 2}W_k/\Var(f)$ where a lattice is affordable and, at scale,
the closed-form forecast from the fitted break-rates, namely the exact heterogeneous product form
$1 - \sum\beta_i/[\prod(1+\beta_i)-1]$ of \propfive{} (Section~\ref{sec:tau}), whose symmetric summary is $\tau$
(share $\approx\tfrac12\tau^2$). The ``$\tau$-forecast'' label refers to this pipeline. Where
a measured share exists for comparison, the forecast understates it (Qwen3-30B: forecast
$3{\times}10^{-4}$ against measured 0.013). Even at that ratio the 235B and GLM-4.6 forecasts
(${\sim}10^{-4}$) stay below 0.01, so they certify additive dominance, not a precise share.

\paragraph{Calibration and evaluation data.} Prediction and interaction-order studies use WikiText-2
(``wiki'') and a Python code corpus, CodeSearchNet functions (``code''). 
The MoE allocation studies use a task-aligned calibration set: BFCL-v3
(live\_simple, live\_multiple, simple\_python, multiple) tool-use prompts plus RULER validation prompts,
tokenized to 16 sequences of 2{,}048 tokens. The fake-quant metric is the KL divergence
$\mathrm{KL}(\text{bf16} \,\|\, \text{quantized})$ of next-token distributions on the disjoint eval half
of this set. Served downstream evaluation uses AIME-2025, GPQA
diamond, MMLU-Pro, HumanEval, and gsm8k (Section~\ref{sec:allocation}).

\begin{table}[t]
  \centering
  \small
  \begin{tabular}{llcccccccc}
    \toprule
    & & \multicolumn{8}{c}{Context size $|S|$ (layers already quantized)} \\
    \cmidrule(lr){3-10}
    Model & Activation scales & 0 & 1 & 2 & 3 & 4 & 5 & 6 & 7 \\
    \midrule
    Qwen3-0.6B & per-block & .183 & .022 & .024 & .022 & .021 & .023 & .022 & .012 \\
    Qwen3-8B (fp32) & per-block & .023 & .006 & .006 & .006 & .006 & .007 & .003 & .009 \\
    Llama-3.2-3B & per-block & .016 & .013 & .010 & .009 & .009 & .007 & .009 & .003 \\
    Qwen3-30B-A3B & per-block & .060 & .027 & .024 & .023 & .022 & .016 & .021 & .021 \\
    Qwen3-0.6B & per-tensor & 10.6 & .997 & .381 & .173 & .111 & .078 & .048 & .011 \\
    \bottomrule
  \end{tabular}
  \caption{Saturation measured directly: mean marginal damage by context size on the certification
  cells of Section~\ref{sec:ordinal}. Each
  entry averages $f(S\cup\{i\}) - f(S)$ over the layer-context pairs with $|S|$ layers already
  quantized. In every cell the marginal damage is largest at $|S|=0$ and falls once other layers are
  quantized concurrently.}
  \label{tab:marginals}
\end{table}

\paragraph{Predictor fitting.} The coverage model $(c, \{a_i\})$ and the additive model are fit by joint
least squares rather than per-coefficient sampling, for the variance and consistency reasons given in
the next two paragraphs. Fits use $N$
sampled configurations. Fits are initialized by linearizing through the generator
(ordinary least squares on the log-headroom values at a provisional ceiling) and refined by bounded
trust-region least squares in loss
space. Fitting in loss space avoids the alternative of transforming the data and then regressing, which
is biased because noise pushed through the nonlinear transform no longer averages to zero.
The surrogate baselines (RF/MiCo, GP-RBF/AutoQRA, RBF kernel
regression/AMQ, CoopQ pairwise) are fit on the same configurations, and GP/RF are
additionally converted into allocators (knapsack over their predicted marginals) for the
allocator-to-allocator comparison in Section~\ref{sec:allocation}. Aumann--Shapley scores are computed fit-free
by the path-integral estimator: the loss gradient is integrated by a 4-node midpoint rule along the
path that scales weights and unit-input activations linearly from full precision ($t=0$) to fully
quantized ($t=1$), one integral per demotion level ($16\to 8$ and $16\to 4$), with MoE routing frozen
along the path since the router's argmax is not differentiable. 

\paragraph{Why joint least squares rather than per-coefficient sampling.} The plug-in estimate of
each Walsh coefficient from samples, $(1/N)\sum_n f(S_n)\chi_T(S_n)$ \citep{lmn1993}, is unbiased, but its
variance
carries the total second moment $\E_{\mup}[f^2]$, because every other coefficient's energy leaks in
through the
sampled, non-orthogonal design. Jointly regressing on all $L+1$ degree-$\le 1$ terms absorbs that
leakage into the fitted terms. Ordinary least squares is the classical control-variate regression
estimator, whose asymptotic variance is
only the energy of $f$ orthogonal to the fitted span, that is, the order-$\ge 2$ energy plus noise, rather
than $\Var(f)$ \citep{lavenberg1981cv,glynn2002cv,owen2013mcbook}. Random-design least squares is
stable and near-optimal already at $N \propto L\log L$ samples \citep{cohen2017optimalwls}. With
${\sim}90\%$ of the variance at order 1 (Section~\ref{sec:walsh}), regression gives roughly a $10\times$
per-coefficient
variance reduction at equal $N$, the reciprocal of the order-$\ge 2$ share, so the certificate also
quantifies the gain from regression. The same
regression-over-sampling argument recurs in Shapley estimation (KernelSHAP against permutation sampling,
\citealp{covert2021kernelshap}).

\paragraph{Statistical footing of the fits.} The coverage fit is ordinary nonlinear least squares:
consistent for the best coverage-family approximation of $f$ even when the model is misspecified
\citep{jennrich1969,white1981}, and the maximum-likelihood estimate under Gaussian noise when it is
correct. Two caveats: the objective is non-convex, so we initialize by linearizing through the
generator as described above, and on near-linear data the ceiling is weakly identified (only the
products $c\,a_i$ are pinned down), which perturbs $(c,a_i)$ individually but not predictions.

\paragraph{Measurement cost.} Cost is counted in measured configurations, since each is a forward pass
over the calibration set. The headline budget is $N\approx L$ configurations, against roughly $100L$
progressive-quantization evaluations for SPQE's permutation-sampled Shapley \citep{impq2025}.

\paragraph{Prediction benchmark in full.} Table~\ref{tab:master} reports the per-cell results of the
Section~\ref{sec:benchmark} benchmark at the small budget, and Fig.~\ref{fig:predn} the
error-versus-$N$ sweep on the 3B-wiki cell. The certificate column is the held-out residual of an
additive fit, the direct \thmone{} estimate: 300 configurations are sampled per cell from $\mu_{0.6}$,
an additive model is fit on the first 150, and the column reports the mean squared residual on the
remaining 150 divided by the damage variance. 

\begin{table}[t]
\centering\small
\caption{Prediction benchmark at the small-$N$ budget ($N=L/2$): Median held-out
relative error at half the $L$ single-layer measurements that isolate-and-sum requires. Entries are single-seed measurements over 120 held-out configurations.
Table~\ref{tab:seeds} reports 3-seed spreads on two W4A4 cells. Isolate-and-sum reaches 0.38--0.63 on the FP4-damaged models (0.6B, 4B, 30B),
against 0.08--0.19 for the better of coverage and additive on 0.6B and 4B. Two cells require comment.
On 8B the damage of the fully quantized model is small ($f(\text{all}) \approx 0.46$), all methods are
approximately tied, and isolate-and-sum looks adequate because there is
little to mis-price. The 30B MoE row is hard for every method (best 0.252). The last column is the
cell's order-$\ge 2$ share under $\mu_{0.6}$. It is
largest on the rows where every method's error is worst (4B-wiki at 0.21 and 30B at 0.30).}
\label{tab:master}
\resizebox{\textwidth}{!}{%
\begin{tabular}{llcccccccc}
\toprule
model & data & isolate-sum & additive & coverage & CoopQ & RF & GP & RBF & order-$\ge 2$ share \\
\midrule
Qwen3-0.6B      & wiki & 0.390 & \textbf{0.080} & 0.110 & 0.105 & 0.111 & 0.143 & 0.175 & .035 \\
Qwen3-0.6B      & code & 0.416 & 0.110 & 0.129 & \textbf{0.094} & 0.150 & 0.166 & 0.162 & .038 \\
Qwen2.5-3B      & wiki & 0.107 & 0.313 & \textbf{0.092} & 0.356 & 0.093 & 0.212 & 0.377 & .016 \\
Qwen2.5-3B      & code & 0.130 & 0.333 & 0.145 & 0.364 & \textbf{0.119} & 0.251 & 0.394 & .024 \\
Qwen3-4B        & wiki & 0.377 & 0.288 & \textbf{0.184} & 0.245 & 0.228 & 0.341 & 0.338 & .210 \\
Qwen3-4B        & code & 0.535 & 0.200 & 0.191 & \textbf{0.180} & 0.185 & 0.207 & 0.252 & .090 \\
Qwen3-8B        & wiki & \textbf{0.076} & 0.161 & 0.131 & 0.193 & 0.117 & 0.224 & 0.262 & .053 \\
Qwen3-8B        & code & 0.100 & 0.207 & 0.153 & 0.241 & \textbf{0.093} & 0.277 & 0.297 & .051 \\
Llama-3.2-3B    & wiki & 0.197 & 0.133 & \textbf{0.128} & 0.224 & 0.156 & 0.245 & 0.285 & .023 \\
Llama-3.2-3B    & code & 0.146 & 0.165 & \textbf{0.126} & 0.224 & 0.157 & 0.282 & 0.316 & .028 \\
Qwen3-30B (MoE) & wiki & 0.629 & 0.368 & 0.433 & 0.344 & 0.424 & 0.386 & \textbf{0.252} & .302 \\
\bottomrule
\end{tabular}}
\end{table}

\begin{figure}[t]
  \centering
  \includegraphics[width=0.72\linewidth]{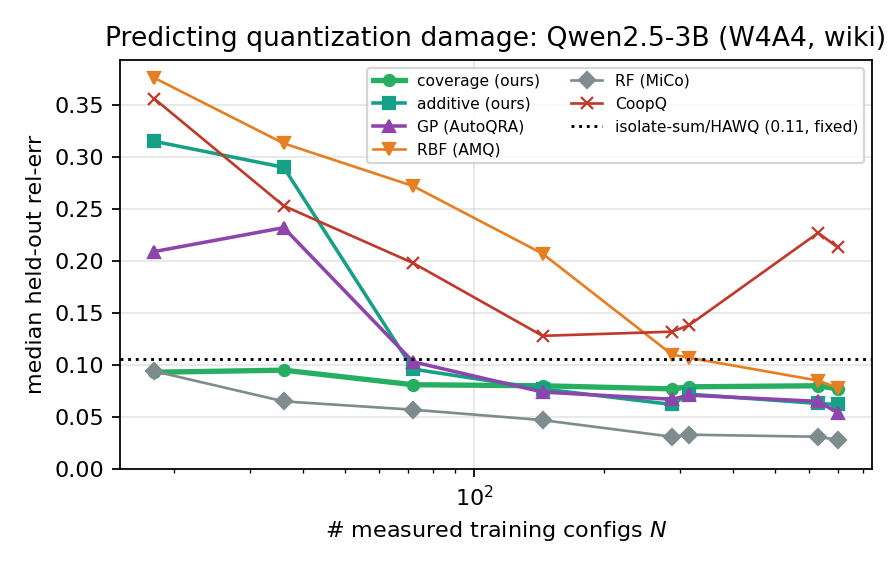}
  \caption{Median held-out relative error vs.\ number of measured configurations on the 3B-wiki cell of
  Table~\ref{tab:master}. Coverage ($L+1$ parameters) sits at its error
  floor from $N=L/2$ (0.093, tied with the random forest at 0.094), the additive fit sits at 0.29--0.32
  for $N \le L$ and ties with coverage from $N \approx 4L$, and the Gaussian
  process needs roughly $4L$ configurations to match. From $N \approx L$ the random forest attains the
  lowest error on this cell, reaching 0.03 at $N \approx 8L$ against coverage's plateau at 0.08. The
  large-$N$ winner varies by cell (on 0.6B-wiki the Gaussian process leads and the forest trails every
  other method). Isolate-and-sum is constant at 0.11, since it has no fitted
  parameters, and its failures occur on the damaged cells of Table~\ref{tab:master}. CoopQ overfits at
  large $N$.}
  \label{fig:predn}
\end{figure}

\begin{table}[t]
\centering\small
\caption{Qwen-Image DiT (Section~\ref{sec:dit}): median held-out relative
error vs.\ $N$, $L=60$ blocks at per-block NVFP4. The final
column is the measured order-$\ge 2$
share.}
\label{tab:dit}
\begin{tabular}{cccccccc}
\toprule
$N$ & additive & coverage & pairwise & RBF-KR & RF & GP & order-$\ge 2$ share (cert.) \\
\midrule
30  & 0.356 & 0.151 & 0.425 & 0.378 & \textbf{0.142} & 0.368 & 0.36 \\
60  & 0.411 & 0.129 & 0.276 & 0.358 & \textbf{0.111} & 0.241 & \\
120 & 0.146 & \textbf{0.113} & 0.199 & 0.302 & 0.116 & 0.150 & \\
240 & 0.114 & 0.124 & 0.162 & 0.245 & \textbf{0.109} & 0.120 & \\
360 & \textbf{0.099} & 0.111 & 0.119 & 0.204 & 0.111 & 0.106 & \\
\bottomrule
\end{tabular}
\end{table}

\paragraph{Certificate estimators.} Table~\ref{tab:certest} reports the per-cell detail behind the
estimator comparison of Section~\ref{sec:tauval}: the root-mean-squared error, over 8 sampling seeds,
of estimating a lattice's true order-$\ge 2$ share from $N$ configurations sampled at $p=0.6$, for the
\propfive(i) forecast from the coverage fit and the held-out residual of an additive fit. The residual
estimator's agreement with the forecast, available at no extra measurement cost, is a per-instance
check of the coverage assumption itself.

\begin{table}[t]
  \centering
  \footnotesize
  \begin{tabular}{lccccccc}
    \toprule
    & & \multicolumn{2}{c}{$N=28$} & \multicolumn{2}{c}{$N=56$} & \multicolumn{2}{c}{$N=112$} \\
    \cmidrule(lr){3-4}\cmidrule(lr){5-6}\cmidrule(lr){7-8}
    cell & true share & fcst & resid & fcst & resid & fcst & resid \\
    \midrule
    Qwen3-0.6B FP4 per-block  & .008 & \textbf{.060} & .139 & \textbf{.067} & .078 & .073 & \textbf{.068} \\
    Qwen3-8B FP4 per-block    & .019 & \textbf{.010} & .366 & \textbf{.009} & .204 & \textbf{.006} & .168 \\
    Qwen3-0.6B INT8 per-tensor & .325 & \textbf{.083} & .362 & \textbf{.105} & .216 & \textbf{.110} & .240 \\
    Qwen3-0.6B FP4 per-tensor & .812 & \textbf{.195} & 1.23 & \textbf{.119} & .319 & \textbf{.067} & .538 \\
    Qwen3-8B FP4 per-tensor   & .922 & \textbf{.235} & 1.01 & \textbf{.164} & .492 & \textbf{.079} & .614 \\
    \bottomrule
  \end{tabular}
  \caption{Estimating a cell's order-$\ge 2$ share from $N$ sampled configurations. Each estimator receives only
  $N$ configurations drawn from $\mup$ at $p=0.6$, and the draw is repeated with 8 independent seeds.
  An entry is the root-mean-squared error of the 8 resulting estimates against the true share. The two
  estimators are the \propfive(i) forecast computed from a coverage fit (fcst) and the held-out residual
  of an additive fit (resid), both defined in Section~\ref{sec:tauval}.
  The forecast is best at 14 of 15 cell-budget pairs. 
  }
  \label{tab:certest}
\end{table}

\paragraph{Allocation.} Given a memory budget
$\mathrm{eff} = \sum \mathrm{numel}\cdot\mathrm{bits} / \sum \mathrm{numel}$ (kernel-independent, with
numel a tensor's element count), a
3-level $\{16,8,4\}$ knapsack minimizes predicted $\Delta\lce$ per effective bit. A unit is a weight tensor
or a fused group that must share one format, such as the q/k/v projections or one MoE layer's experts.
The served recipes
additionally tie fusion groups (q/k/v into one \code{qkv\_proj}, one expert format per MoE layer block) so
the per-layer-mixed checkpoint loads in TensorRT-LLM. Baselines are matched to the same $\mathrm{eff}$.
The two fit-free variants price demotions differently: AS-add uses the path-integral coefficients
directly, so predicted damage adds across units, while AS-cov converts the same coefficients to
break-rates, with the ceiling calibrated from one measured corner (the all-4-bit configuration), so
damage composes through the coverage form. The two structural rules use no damage measurements.
Naive-uniform lowers every unit together through the $\{16,8,4\}$ ladder and, if the resulting uniform
level falls below the target effective bits, restores the smallest tensors to 16 bits until the next
restoration would exceed the budget. Naive-routed places routed experts at the lowest level and protects
attention.

\input{generated/tab_ge4served30}
\input{generated/tab_ge4served235}

\paragraph{GPTQ ladder details (Section~\ref{sec:allocsub4}).} Grids are symmetric with one scale per
128 input columns, and each linear layer's Hessian is accumulated from the scoring half of the task
calibration set. The all-units GPTQ-4 calibration KL of 0.054 is comparable to NVFP4 RTN at 0.047, and
at 2 bits RTN gives 7.0 against GPTQ's 13.6, so 2 bits is unusable as a uniform level under either
quantizer. For the fused expert blocks, all experts
in a layer share the Hessian of the block's full token stream: the first projection's Hessian is
accumulated from the block input, and the second projection's from intermediate activations recomputed
on that input, a standard simplification that ignores the router's token-to-expert assignment. At the
235B scale the all-units GPTQ-3 calibration KL is 0.115.

\paragraph{Sub-4 served detail (Section~\ref{sec:allocsub4}).} Table~\ref{tab:sub4served30} gives the
full served grid on the reasoning benchmarks for every method and budget. The reasoning benchmarks
are AIME-2025 (30 problems), GPQA diamond (198 questions), and MMLU-Pro (a fixed
1{,}000-question subset), decoded at temperature 0.6, top-$p$ 0.95, top-$k$ 20, with a 32{,}768-token
budget on AIME-2025 and 16{,}384 elsewhere. The ModelOpt configuration scores at the chance level of
each multiple-choice format (23.2 on four-option GPQA, 9.0 on ten-option MMLU-Pro).

\input{generated/tab_sub4served30}

\paragraph{Joint measurement of the allocation tables.} 
CoopQ's quadratic program is solved by
greedy plus local search (SCIP unavailable). Its published ${\sim}100L$ budget (${\sim}24$k evaluations)
was not run (Section~\ref{sec:allocmoe} reports the 1{,}205- and 942-evaluation tiers). An ablation
isolates the cause of its deficit: the additive and coverage models fit on the
942 configurations CoopQ's permutation walks visited, allocating on its alphabet, land at
CoopQ's
level ($+.004$--$.008$, paired-significant), because configurations visited along a permutation walk are
efficient
for averaging marginal contributions but nearly collinear as regression data, while the same regression
on 300 independently sampled configurations lands within $.002$ of AS at every contended budget (within
$.001$ at eff $\ge 4.5$).

\paragraph{Hardware and software.} Scoring and fake-quant run on a single B200 (device-map
pipeline-parallel across $8\times$B200 for the 235B/355B/753B models), and serving runs on $1\times$B200
(30B, TP=1) up to $8\times$B200 (GLM, TP=8). PyTorch and Hugging Face Transformers handle fake-quant,
NVIDIA ModelOpt and TensorRT-LLM handle the $\ge 4$-bit
deployment recipes.
SGLang serves the sub-4 benchmarks from bf16-exported checkpoints and the GLM 5.2 benchmarks
(Section~\ref{sec:allocsub4}),
vLLM serves the Qwen3.6-27B comparison (Section~\ref{sec:alloc27b}), and
scikit-learn
provides the surrogate baselines.

%% file: generated/tab_ge4served30.tex
\begin{table}[t]
\centering\small
\caption{Served AIME-2025 accuracy (average of 8 sampled seeds) of every Qwen3-30B allocation at
effective bits 4.2--8.0 over the $\{16,8,4\}$ levels,
served by SGLang; bf16 anchor 73.3. GPQA diamond spans 59.1--63.1 and MMLU-Pro 76.2--78.3 across the same
grid, no method ordering is significant at any budget (a
16-seed replication of the AS-additive--uniform gap at eff 4.5 gives 73.3
against 69.8, about 1.4 standard errors).}
\label{tab:ge4served30}
\begin{tabular}{lccccc}
\toprule
 & eff 4.2 & eff 4.5 & eff 5.0 & eff 6.0 & eff 8.0 \\
\midrule
AS-additive & 68.8 & 75.4 & 69.2 & 71.7 & 70.4 \\
AS-coverage & 69.2 & 69.6 & 69.6 & 71.2 & 70.4 \\
uniform & 68.8 & 69.6 & 69.2 & 69.6 & 72.1 \\
routed & 71.2 & 70.8 & 69.6 & 71.7 & 70.8 \\
ModelOpt & 70.0 & 70.0 & 72.9 & 70.8 & 68.8 \\
fitted & 70.8 & 71.7 & 71.2 & 73.8 & 69.6 \\
gp & 70.8 & 69.6 & 73.8 & 69.6 & 72.1 \\
rf & 72.5 & 73.3 & 70.8 & 69.2 & 72.5 \\
CoopQ & 67.5 & 75.8 & 69.6 & 68.8 & 74.2 \\
MC-MoE & 67.1 & 67.5 & 70.4 & 68.8 & 71.7 \\
int8-outlier & 71.7 & 67.1 & 72.5 & 73.3 & 75.0 \\
\bottomrule
\end{tabular}
\end{table}

%% file: generated/tab_ge4served235.tex
\begin{table}[t]
\centering\footnotesize
\caption{Served accuracy of the Qwen3-235B allocations at effective bits 4.2--8.0 over the
$\{16,8,4\}$ levels (allocations derived on the evaluation-task corpus of
Table~\ref{tab:sub4kl}).
Each cell gives AIME-2025 (average of 8 sampled seeds) / GPQA diamond / MMLU-Pro; bf16 baselines
80.0/71.7/82.9. Every measured cell is within seed noise of the anchor on all three benchmarks.}
\label{tab:ge4served235}
\begin{tabular}{lccc}
\toprule
 & eff 4.2 & eff 5.0 & eff 8.0 \\
\midrule
uniform & 79.6/70.2/82.6 & 78.3/69.2/82.3 & 81.2/70.7/82.7 \\
AS-additive & 77.1/70.7/83.4 & 80.4/69.2/83.0 & 82.1/68.2/83.2 \\
ModelOpt & 78.3/69.2/82.5 & 80.4/73.2/81.8 & 79.6/67.2/83.7 \\
\bottomrule
\end{tabular}
\end{table}

%% file: generated/tab_sub4served30.tex
\begin{table}[t]
\centering\footnotesize
\caption{Served accuracy of every sub-4 Qwen3-30B configuration on the reasoning benchmarks.
Each cell gives AIME-2025 (average of 8 sampled seeds) / GPQA diamond / MMLU-Pro pass rates for
the method's configuration at that effective-bit budget All allocations are derived from damages measured on the
evaluation-task calibration corpus of Table~\ref{tab:sub4kl}. bf16 baselines: 73.3/61.6/76.2.}
\label{tab:sub4served30}
\begin{tabular}{lcccc}
\toprule
 & eff 3.5 & eff 3.0 & eff 2.9 & eff 2.75 \\
\midrule
additive & 68.8/62.1/73.7 & 59.6/51.0/73.4 & 49.6/44.4/67.1 & 23.3/31.3/54.3 \\
coverage & 65.4/56.1/74.6 & 61.7/53.0/74.0 & 55.0/47.0/68.1 & 25.4/34.3/55.4 \\
CoopQ & 66.2/55.6/74.1 & 59.2/53.5/72.1 & 46.2/47.5/67.3 & 14.2/27.3/55.2 \\
isolated-rank & 63.3/55.6/74.7 & 57.1/50.5/73.1 & 56.2/52.0/67.0 & 26.2/30.8/54.4 \\
uniform & 65.8/57.1/74.5 & 59.2/54.0/73.5 & 41.2/47.5/65.2 & 30.0/42.9/58.6 \\
fitted & 56.7/54.0/71.4 & 34.2/38.9/60.5 & 33.3/36.4/57.5 & 15.4/25.8/48.3 \\
gp & 53.3/49.5/69.3 & 31.2/32.8/58.1 & 30.0/37.9/55.1 & 3.3/25.8/45.4 \\
rf & 57.1/50.5/70.6 & 3.3/20.7/45.0 & 0.0/9.6/20.9 & 0.0/7.6/4.3 \\
ModelOpt & 40.0/31.3/60.4 & 0.0/18.7/8.7 & 0.0/22.7/9.1 & 0.0/18.2/6.3 \\
MC-MoE & 38.8/45.5/66.1 & 0.0/18.7/9.0 & 0.0/16.7/8.0 & 0.0/16.7/4.7 \\
int8-outlier & 35.8/31.8/53.8 & 0.0/18.7/7.0 & 0.0/13.1/--- & 0.0/12.1/4.0 \\
\bottomrule
\end{tabular}
\end{table}

%% file: appendix/g_interp.tex
\section{Interpretability of the fitted parameters}\label{sec:interp}

\paragraph{Budget pricing.} Fitting coverage on ${\sim}200$ configurations of Qwen3-0.6B (all 28
layers, activations quantized at FP4 per-block-32, configurations drawn at $p=0.6$), we price
unseen configurations of the form ``protect the $m$ most fragile layers and quantize the rest'',
comparing the predicted $\Delta L$, which costs no forward passes once the model is fit, against the
measured value (Table~\ref{tab:budget}). Across the deployment range, protecting a few to about half
the layers, coverage prices the configuration to within 4\%, whereas isolate-and-sum's prediction is
$2$--$3\times$ the truth. The lone breakdown is quantizing a single layer, where the model, fit on
configurations drawn at $p=0.6$, extrapolates far outside its fit distribution and misses by
${\sim}95\%$.

\begin{table}[t]
\centering\small
\caption{Budget pricing on unseen protect-$m$ configurations (Qwen3-0.6B).}
\label{tab:budget}
\begin{tabular}{cccc}
\toprule
protected $m$ & predicted $\Delta L$ & measured $\Delta L$ & rel.\ err. \\
\midrule
0 (all quantized) & 6.26 & 6.50 & 3.7\% \\
2  & 6.20 & 6.21 & 0.2\% \\
4  & 6.11 & 6.05 & 0.9\% \\
8  & 5.81 & 5.75 & 1.0\% \\
12 & 5.31 & 5.52 & 3.9\% \\
\bottomrule
\end{tabular}
\end{table}

\paragraph{The fragility scores are corpus-portable.} Fitting the break-rates $a_i$ independently on
wiki and on code yields strongly agreeing vectors (Spearman $\rho = 0.74$, Pearson $r = 0.86$ on
Qwen3-0.6B; $\rho = 0.78$, $r = 0.93$ on Qwen2.5-3B), with the same deep layers fragile in both
corpora. 
We propose to fit $a_i$ once on a source corpus, and to deploy on a new corpus refit only the single scalar ceiling
$c$. At small budgets this one-parameter transfer attains lower error than the $(L{+}1)$-parameter
in-domain fit, 0.058 against 0.098 at 9 configurations for code-to-wiki on both 0.6B and 3B, and 0.076
against 0.110 at 36 configurations for wiki-to-code on 3B.

\paragraph{Multi-seed robustness.} Table~\ref{tab:seeds} reports median held-out relative error over 3
seeds at the $N\approx L$ budget. Coverage attains the lowest error on 0.6B and ties the
Gaussian process on 3B, isolate-and-sum fails uniformly across seeds, and 3B's higher order-$\ge 2$
share (0.26 against 0.6B's 0.05) is exactly where coverage's margin over the additive fit is largest,
so the certificate forecasts where the saturation structure pays off before any predictor is fit.

\begin{table}[t]
\centering\small
\caption{Multi-seed robustness at the $N\approx L$ budget, with the certificate per row.}
\label{tab:seeds}
\resizebox{\textwidth}{!}{%
\begin{tabular}{lcccccccc}
\toprule
 & isolate-sum & additive & coverage & CoopQ & RF & GP & RBF & order-$\ge 2$ share (cert.) \\
\midrule
Qwen3-0.6B, $N{=}L{=}28$ & $0.474\pm0.010$ & $0.144\pm0.008$ & $\mathbf{0.085\pm0.013}$ & $0.113\pm0.002$ & $0.133\pm0.003$ & $0.099\pm0.002$ & $0.134\pm0.018$ & 0.05 \\
Qwen2.5-3B, $N{=}L{=}36$ & $2.798\pm0.030$ & $0.221\pm0.036$ & $\mathbf{0.098\pm0.009}$ & $0.143\pm0.019$ & $0.109\pm0.005$ & $0.095\pm0.008$ & $0.142\pm0.012$ & 0.26 \\
\bottomrule
\end{tabular}}
\end{table}

%% file: appendix/e_proofs.tex
\section{Proofs (Theorems 3 and 4, Proposition 5, the two anchors, kernel eigenvalues)}\label{app:proofs}

\paragraph{Notation.} Under the product measure $\mup$ (each $x_i \sim \mathrm{Bernoulli}(p)$,
independent), the space of functions on $\{0,1\}^L$ carries the inner product
$\langle g,h\rangle = \E_{\mup}[gh]$. With the standardized coordinates
$\phi_i(x) := (x_i-p)/\sqrt{p(1-p)}$ ($\E[\phi_i]=0$, $\E[\phi_i^2]=1$), the Walsh functions
$\chi_T := \prod_{i\in T}\phi_i$ ($\chi_\emptyset := 1$) form an orthonormal basis: for $T\ne T'$,
independence factorizes $\E[\chi_T\chi_{T'}]$ per coordinate, and each coordinate in $T\triangle T'$
contributes $\E[\phi_i]=0$. Hence $f = \sum_T \fhat(T)\chi_T$ with $\fhat(T)=\langle f,\chi_T\rangle$,
$\E[f]=\fhat(\emptyset)$, and (Parseval) $\Var_p(f) = \sum_{T\ne\emptyset} \fhat(T)^2$
\citep{odonnell2014boolean}.

\paragraph{Proof of Theorem 3 (M\"obius spectrum).} For $T\ne\emptyset$,
$\varphi(T) = \sum_{R\subseteq T}(-1)^{|T|-|R|} f(R)$ with $f(R) = c - c\prod_{i\in R}(1-a_i)$. The
constant $c$ alternates to zero over the subsets of a nonempty $T$. For the product part, the alternating
subset-sum of a product factorizes coordinatewise,
\[
\sum_{R\subseteq T}(-1)^{|T|-|R|}\prod_{i\in R}g_i = \prod_{i\in T}(g_i-1).
\]
With $g_i = 1-a_i$ this gives
$-c\prod_{i\in T}(-a_i) = c\,(-1)^{|T|+1}\prod_{i\in T}a_i$. \qed

\paragraph{Proof of Theorem 4 (Walsh spectrum).} Write $f = c - H$ with the headroom
$H(x) = c\prod_i g_i(x_i)$, $g_i(x_i) = 1 - a_i x_i$. Constants are orthogonal to every $\chi_T$ with
$T\ne\emptyset$, so $\fhat(T) = -\hat H(T)$ for $T\ne\emptyset$, while
$\fhat(\emptyset) = \E[f] = c - \hat H(\emptyset) = c\bigl(1-\prod_i(1-a_ip)\bigr)$. Because $H$ and $\chi_T$ are both products over
\emph{independent} coordinates, the expectation factorizes:
$\hat H(T) = c\prod_{i\in T}\E[g_i\phi_i]\cdot\prod_{i\notin T}\E[g_i]$. The one-coordinate integrals
(using $x_i^2=x_i$): $\E[g_i] = 1-a_ip$ and
$\E[g_i\phi_i] = -a_i\,\E[x_i(x_i-p)]/\sqrt{p(1-p)} = -a_i\sqrt{p(1-p)}$. Substituting and negating gives
$\fhat(T) = c\,(-1)^{|T|+1}(p(1-p))^{|T|/2}\prod_{i\in T}a_i\cdot\prod_{i\notin T}(1-a_ip)$. \qed

\begin{proof}[Proof of \propfive(ii) and (iii)]
Notation as in Section~\ref{sec:tau}, with $B := \tau^2 = \sum_i\beta_i$. By \propfive(i),
$\sharegeq = S_2/S_1$ with $S_1 := \sum_{k\ge 1}\es{k} = \prod_i(1+\beta_i)-1$ and
$S_2 := \sum_{k\ge 2}\es{k}$.

\emph{(ii)} The leading term is exact algebra: $(\sum\beta_i)^2 = \sum\beta_i^2 + 2\es{2}$, so
$\es{2} = \tfrac12(B^2 - \sum\beta_i^2) = \tfrac12 B^2(1 - 1/\Leff)$, hence
$\es{2}/\es{1} = \tfrac12 B(1 - 1/\Leff)$. For the error we use $k!\,\es{k} \le B^k$ (expand
$B^k = (\sum\beta)^k$: the $k!\,\es{k}$ injective monomials are a subset of its nonnegative terms), whence
$S_2 - \es{2} = \sum_{k\ge 3}\es{k} \le \sum_{k\ge 3}B^k/k! \le (B^3/6)e^B$ (termwise,
$3!(k-3)! \le k!$) and $S_1 \le e^B - 1$. \emph{Upper:}
$\sharegeq \le S_2/\es{1} \le \es{2}/B + (B^2/6)e^B$. \emph{Lower:}
$\sharegeq \ge \es{2}/S_1 \ge \es{2}/(e^B-1) = (\es{2}/B)\cdot B/(e^B-1) \ge (\es{2}/B)(1 - B/2)
\ge \es{2}/B - B^2/4$, using the elementary inequality $x/(e^x-1) \ge 1 - x/2$ for $x \ge 0$ (the power
series of $x - (e^x-1)(1-x/2)$ is $\sum_{k\ge 3}\frac{k-2}{2\,k!}x^k$, with nonnegative coefficients) and
$\es{2}/B \le B/2$. Both one-sided deviations are $\le (B^2/4)e^B$, which is the stated remainder bound.
The profile's third moment (via $\es{3}$) first appears at $O(\tau^4)$.

\emph{(iii)} At fixed $B$, $\sharegeq = 1 - B/(P-1)$ is strictly increasing in
$P := \prod_i(1+\beta_i)$, and $\log P = \sum_i \log(1+\beta_i)$ is a symmetric sum of one concave scalar
function over the coordinates, the canonical Schur-concave form \citep{marshall2011majorization}, so
$\sharegeq$ is Schur-concave at fixed $B$. The uniform profile $\beta_i \equiv B/L$ is majorized by
every profile of total $B$, hence is the maximizer (equivalently, Jensen on the strictly concave
$\log(1+\cdot)$ gives $\log P \le L\log(1+B/L)$, with equality iff all $\beta_i$ are equal). The outer
bound follows since $L \mapsto (1+B/L)^L$ increases to $e^B$ (AM--GM on $L$ copies of $1+B/L$ and one
copy of $1$). As $\Leff \to 1$ the share vanishes, since a one-fragile-layer $f \approx c\,a_i\,x_i$
is a one-coordinate, hence exactly degree-1, function.
\end{proof}

\paragraph{Corollary (the two anchors of Section~\ref{sec:anchor}).} By Theorem 1 the optimal degree-$\le 1$
predictor is the projection $\fhat(\emptyset) + \sum_i \fhat(\{i\})\chi_{\{i\}}$. Converting
$\chi_{\{i\}}$ back to the $\{0,1\}$ coordinate gives the optimal additive coefficient
$w_i = \fhat(\{i\})/\sqrt{p(1-p)} = \E_p[f\mid x_i=1] - \E_p[f\mid x_i=0]$, the in-context marginal, which
under coverage (Theorem~4 with $T=\{i\}$) is $w_i = c\,a_i\prod_{j\ne i}(1-a_jp)$. The empty-anchor
slope is $s_i = f(\{i\}) = c\,a_i$ directly from the coverage model. Hence the inflation factor
$s_i/w_i = 1/\prod_{j\ne i}(1-a_jp) \approx e^{p\sum_j a_j}$ quoted in Section~\ref{sec:anchor}. \qed


\paragraph{The symmetric case.} For completeness, the equality case of \propfive(iii) with its order
spectrum.

\begin{corollary*}[symmetric saturation law]
If $a_i \equiv a$, then $\beta_i \equiv \beta = p(1-p)\,a^2/(1-ap)^2$ and $\tau^2 = L\beta$, the order
spectrum is binomial,
\[
  W_k = c^2\binom{L}{k}\,(p(1-p))^k a^{2k}(1-ap)^{2(L-k)},
\]
and the certificate collapses to the one-parameter law
\[
  \sharegeq = 1 - \frac{\tau^2}{(1+\tau^2/L)^L - 1}
  \;\xrightarrow{\;\tau\to 0\;}\; \frac{L-1}{2L}\,\tau^2 \;\approx\; \tfrac{1}{2}\tau^2 .
\]
\end{corollary*}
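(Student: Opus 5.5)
The symmetric case is the specialization of \propfive{} to equal break-rates, so I will prove the corollary by substituting $a_i\equiv a$ into its parts (i) and (iii). No new estimate is needed. Setting $a_i\equiv a$ in the definition of $\beta_i$ gives the common value $\beta=p(1-p)a^2/(1-ap)^2$ immediately, and $\tau^2=\sum_i\beta_i=L\beta$.

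For the binomial spectrum I will use \propfive(i), $W_k=c^2\bigl[\prod_i(1-a_ip)^2\bigr]\es{k}$. With all $\beta_i$ equal, the elementary symmetric polynomial is $\es{k}=\binom{L}{k}\beta^k$. The prefactor becomes $(1-ap)^{2L}$. Multiplying these, the factor $(1-ap)^{-2k}$ hidden in $\beta^k$ combines with $(1-ap)^{2L}$ to give $(1-ap)^{2(L-k)}$, which is the stated formula. As a check, the same result follows directly from \thmfour{}: square $\fhat(T)$ and count the $\binom{L}{k}$ sets with $|T|=k$.

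For the closed form I will substitute into \eqref{eq:hetshare}: $\prod_i(1+\beta_i)=(1+\beta)^L=(1+\tau^2/L)^L$. This is also the equality case of \propfive(iii), which is consistent with the claim that the symmetric profile attains the bound.

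For the small-$\tau$ limit I will invoke \propfive(ii). Equal $\beta_i$ give $\Leff=(L\beta)^2/(L\beta^2)=L$, so $\sharegeq=\frac{L-1}{2L}\tau^2+O(\tau^4e^{\tau^2})$. The final approximation $\approx\tfrac12\tau^2$ holds for large $L$. Alternatively, a direct expansion $(1+x/L)^L-1=x+\frac{L-1}{2L}x^2+O(x^3)$ gives the same leading coefficient.

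The only real obstacle is reading the arrow notation correctly. The statement $\to\frac{L-1}{2L}\tau^2$ as $\tau\to0$ has to be understood as asymptotic equivalence (the ratio tends to $1$), not as a literal limit, since both sides tend to $0$. I will state it in that form, backed by the remainder bound of part (ii).
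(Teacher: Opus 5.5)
Your proposal is correct and follows the paper's proof essentially step for step. You specialize \propfive(i) using $\es{k}=\binom{L}{k}\beta^k$ and $\prod_i(1+\beta_i)=(1+\beta)^L=(1+\tau^2/L)^L$, identify the result as the equality case of \propfive(iii), and read off the small-$\tau$ behaviour from \propfive(ii) with $\Leff=L$. The differences are cosmetic: the paper takes the binomial $W_k$ directly from \thmfour{} (what you use as a cross-check) rather than from the product form, and your Taylor expansion of $(1+\tau^2/L)^L-1$ is an optional extra confirmation of the leading coefficient $\frac{L-1}{2L}$.
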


\begin{proof}
With equal break-rates every size-$k$ subset contributes the same product, so
$\es{k} = \binom{L}{k}\beta^k$ and $\prod_i(1+\beta_i) = (1+\beta)^L$, and \propfive(i) reduces to the
stated law, which is also the equality case of \propfive(iii). The $W_k$ formula is \thmfour{} with
equal $a_i$ times the $\binom{L}{k}$ count of size-$k$ sets, and the small-$\tau$ limit is
\propfive(ii) with $\Leff = L$.
\end{proof}

\paragraph{Walsh eigenvalues of the RBF kernel (Section~\ref{sec:ood}).} On binary vectors the squared
Euclidean distance collapses to the size of the symmetric difference, $\lVert x - x'\rVert^2 =
\sum_i (x_i - x_i')^2 = |x \oplus x'|$, so the RBF kernel is a function of the XOR alone,
$K(x,x') = \kappa(x \oplus x')$ with $\kappa(u) = e^{-\gamma \sum_i u_i}$. For any such kernel, the
operator $(T_K f)(x) = \E_{x'}[K(x,x')f(x')]$ under the uniform measure ($p=\tfrac12$) is diagonal in
the Walsh basis. Fix $x$ and reparametrize the average over $x'$ by $u := x \oplus x'$. Hence
\[
  (T_K\chi_T)(x) = \E_{x'}\bigl[\kappa(x \oplus x')\,\chi_T(x')\bigr]
  = \E_u\bigl[\kappa(u)\,\chi_T(x \oplus u)\bigr].
\]
The character identity
$\chi_T(x \oplus u) = \chi_T(x)\,(-1)^{\sum_{i\in T} u_i}$ then gives
$T_K\chi_T = \lambda_T\,\chi_T$ with $\lambda_T = \E_u\bigl[\kappa(u)\,(-1)^{\sum_{i\in T} u_i}\bigr]$.
This expectation factorizes over the independent uniform bits,
\[
  \lambda_T \;=\; \prod_{i\in T}\E\bigl[e^{-\gamma u_i}(-1)^{u_i}\bigr]
  \prod_{i\notin T}\E\bigl[e^{-\gamma u_i}\bigr]
  \;=\; \Bigl(\tfrac{1-e^{-\gamma}}{2}\Bigr)^{|T|}\Bigl(\tfrac{1+e^{-\gamma}}{2}\Bigr)^{L-|T|}
  \;=\; \Bigl(\tfrac{1+e^{-\gamma}}{2}\Bigr)^{L}\,\tanh(\gamma/2)^{|T|}.
\]
The eigenvalues are positive, depend on $T$ only through $|T|$, and decay geometrically in $|T|$ with
ratio $\tanh(\gamma/2) < 1$, which is the high-order shrinkage quoted in Section~\ref{sec:ood}. At
$p \ne \tfrac12$ the kernel is still a function of $x \oplus x'$, but the $\mup$-Walsh basis is no
longer exactly its eigenbasis, which is why Section~\ref{sec:ood} states the diagonalization as
approximate at $p = 0.6$.

%% file: appendix/f_evidence.tex
\section{The assembled evidence for the coverage model}\label{app:evidence}

Table~\ref{tab:evidence} collects the evidence for the coverage model in one place, with each claim's
standing stated explicitly, since the support is deliberately heterogeneous: a representation
theorem, certified measurements, a confirmed prediction, and a set of benchmark results. Detailed
treatments are at the listed locations.

\begin{table}[h]
  \centering
  \footnotesize
  \begin{tabular}{@{}p{0.26\linewidth}p{0.30\linewidth}p{0.18\linewidth}p{0.17\linewidth}@{}}
    \toprule
    Claim & Evidence & Standing & Where \\
    \midrule
    The ceiling $c$ exists & bounded loss with decaying marginal damage on exact lattices & measured & Section~\ref{sec:coverage} \\
    The class $g\bigl(\sum_i b_i\bigr)$ is canonical & Acz\'el--Ling representation & proven & App.~\ref{app:link} \\
    $f$ is within ${\sim}2\%$ ordinal defect of the class & split-half certification against matched-noise nulls & measured, certified & Section~\ref{sec:ordinal}, App.~\ref{app:ordinal} \\
    Alternating M\"obius spectrum, overshoot up to $32\times$ & \thmthree{} prediction confirmed on exact lattices & confirmed prediction & Section~\ref{sec:duality}, App.~\ref{app:proofs} \\
    Certificate identity and its $O(L)$ product form & \thmone{}, \thmfour{}, \propfive{} & proven, numerically verified & Sections~\ref{sec:walsh}, \ref{sec:tau}, App.~\ref{app:proofs} \\
    $\tau$ organizes the measured order-$\ge 2$ shares & 36 dense-model and 32 MoE-block cells; Spearman $0.958$ ($\tau$), Pearson $0.984$--$0.997$ (exact form) & measured & Section~\ref{sec:tauval} \\
    The share is forecastable from sampled configurations & estimator comparison against held-out lattices (Table~\ref{tab:certest}) & measured & Section~\ref{sec:tauval} \\
    The generator is probit-$\sqrt{\cdot}$, with exp its tractable approximation & slope calibration across six cells & measured & App.~\ref{app:knapsack} \\
    Aumann--Shapley recovers the optimal coefficient at an effective density, and its layer ordering exactly & Prop.~B.1; real-model correlation $0.785$ & proven under coverage; empirical validation & App.~\ref{app:as} \\
    The fit transports across deployment densities & degradation $1.3$--$2.0\times$ against the GP's $2.1$--$4.9\times$ at $N=L/2$ & measured & Section~\ref{sec:predictor} \\
    \bottomrule
  \end{tabular}
  \caption{The evidence for the coverage model, with each claim's standing. Proven means a theorem
  (numerically verified where marked),
  measured means a lattice or benchmark result, and a confirmed prediction is a property of the data
  that the fit did not use.}
  \label{tab:evidence}
\end{table}

%% file: appendix/f_related.tex
\section{Extended related work}\label{app:related}

\paragraph{MoE-specific mixed precision.} For mixture-of-experts models, MC-MoE
\citep{mcmoe2025} solves a linear program over expert routing and frequency importance. It allocates by
an additive importance score and does not analyze the order of the damage or report a validity
certificate. The head-to-head comparisons against CoopQ and MC-MoE appear in
Section~\ref{sec:allocation}.

\paragraph{Path-integral sensitivity.} Integrated Gradients \citep{sundararajan2017ig} and the
Aumann--Shapley value \citep{aumann1974values} are the classical methods. PQI/ReQuant \citep{hu2025pqi}
applies the path integral to weight-level sensitivity but assumes per-weight effects sum, and QIG
\citep{qig2026} is
token-level for vision-language models. We apply the path integral to the layer-selection set function
(App.~\ref{app:as}).

\paragraph{Rotation methods.} QuaRot, SpinQuant, and QuIP\#
\citep{ashkboos2024quarot,liu2024spinquant,tseng2024quipsharp} flatten the representation before
quantizing rather than allocating precision. They alter the base quantizer on which any allocator
operates, so they compose with allocation (Section~\ref{sec:future}). A rotation that flattens per-layer
heterogeneity also removes the per-layer differences that allocation exploits.

\paragraph{Low-order surrogates of subset functions outside quantization.} The analytical approach has
precedents in several adjacent literatures: functional ANOVA over configuration spaces
\citep{hutter2014fanova}; Datamodels \citep{ilyas2022datamodels}, whose finding that linear surrogates of
training-data-subset effects work well is the data-subset analog of our first-order finding; SPEX
\citep{spex2025}, sparse Boolean-Fourier surrogates of LLM outputs over input-masking subsets; the
Harsanyi-interaction line \citep{ren2023harsanyi}, which applies M\"obius machinery to input coalitions;
and Walsh surrogates for pseudo-Boolean black-box optimization \citep{lepretre2019walsh,bocs2018}. 

\paragraph{In-context and isolated measurements.} That an effect measured in isolation differs from the effect in context
is known qualitatively from coalition-value pruning \citep{ancona2020shapley}, the joint-versus-sum
analysis of Combinatorial Brain Surgeon for weight pruning \citep{cbs2022}, the Hydra-effect self-repair
mechanism \citep{mcgrath2023hydra}, the permutation-sampled marginals of CoopQ's SPQE estimator
\citep{impq2025}, and the
layer-subset surrogates now used for LLM pruning \citep{pruninggame2026}, all of which work around the
same anchor problem. What we contribute is the application to mixed-precision quantization,
specifically, the closed-form inflation factor under saturation, its measured
magnitude on modern LLMs, and a fit-free estimator of the correct coefficient (Section~\ref{sec:predictor}), one
Aumann--Shapley path integral in place of the SPQE estimator's ${\sim}100L$ quantized evaluations.